\def\eqref#1{equation~\ref{#1}}
\def\1{\bm{1}}
\DeclareMathAlphabet{\mathsfit}{\encodingdefault}{\sfdefault}{m}{sl}
\SetMathAlphabet{\mathsfit}{bold}{\encodingdefault}{\sfdefault}{bx}{n}
\newcommand{\E}{\mathbb{E}}
\DeclareMathOperator*{\argmax}{arg\,max}
\DeclareMathOperator*{\argmin}{arg\,min}
\DeclareMathOperator{\sgn}{sgn}
\newcommand{\bB}{{\mathbb B}}
\renewcommand{\cal}{\mathcal}
\newcommand\cA{{\mathcal A}}
\newcommand\cB{{\mathcal B}}
\newcommand{\cG}{{\cal G}}
\newcommand{\cN}{{\cal N}}
\newcommand{\cP}{{\cal P}}
\newcommand{\bE}{\mathbb{E}}
\newcommand{\bN}{\mathbb{N}}
\newcommand{\bP}{\mathbb{P}}
\newcommand{\Prob}{\mathbb{P}}
\newcommand{\bR}{{\mathbb R}}
\newtheorem{theorem}{Theorem}
\newtheorem{lemma}{Lemma}
\newtheorem{assumption}[theorem]{Assumption}
\newtheorem{definition}{Definition}
\newtheorem{proof}{Proof}
\newtheorem{claim}[theorem]{Claim}
\def \ECE{\text{ECE}}
\def \RECE{\text{R-ECE}}
\def \SRECE{\text{SR-ECE}}
\def \SECE{\text{S-ECE}}
\def \ECESR{\text{ECE}^{S\rightarrow R}}
\def \ECERS{\text{ECE}^{R\rightarrow S}}
\title{Improving Predictor Reliability with Selective Recalibration}
\author{\name Thomas P. Zollo \email tpz2105@columbia.edu \\
      \addr Columbia University
      \AND
      \name Zhun Deng \email zhun.d@columbia.edu \\
      \addr Columbia University
      \AND
      \name Jake C. Snell \email js2523@princeton.edu \\
      \addr Princeton University
      \AND
      \name Toniann Pitassi \email toni@cs.columbia.edu \\
      \addr Columbia University
      \AND
      \name Richard Zemel \email zemel@cs.columbia.edu \\
      \addr Columbia University}
\begin{document}

\maketitle

\begin{abstract}

A reliable deep learning system should be able to accurately express its confidence with respect to its predictions, a quality known as calibration.
One of the most effective ways to produce reliable confidence estimates with a pre-trained model is by applying a post-hoc recalibration method.
Popular recalibration methods like temperature scaling are typically fit on a small amount of data and work in the model's output space, as opposed to the more expressive feature embedding space, and thus usually have only one or a handful of parameters.
However, the target distribution to which they are applied is often complex and difficult to fit well with such a function.
To this end we propose \textit{selective recalibration}, where a selection model learns to reject some user-chosen proportion of the data in order to allow the recalibrator to focus on regions of the input space that can be well-captured by such a model.
We provide theoretical analysis to motivate our algorithm, and test our method through comprehensive experiments on difficult medical imaging and zero-shot classification tasks.  Our results show that selective recalibration consistently leads to significantly lower calibration error than a wide range of selection and recalibration baselines.

\end{abstract}

\section{Introduction}

\begin{figure}[!ht]
\centering
    \includegraphics[width=\textwidth]{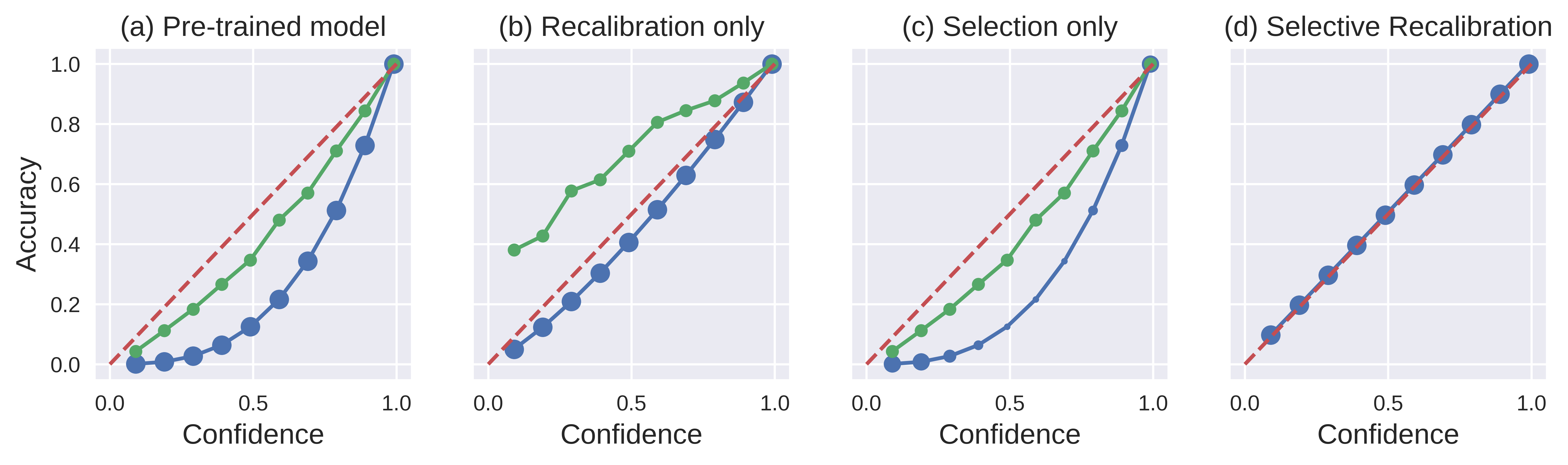}
    \caption{
    Reliability Diagrams for a model that has different calibration error (deviation from the diagonal) in different subsets of the data (here shown in blue and green).  The data per subset is binned based on confidence values; each marker represents a bin, and its size depicts the amount of data in the bin. The red dashed diagonal represents perfect calibration, where confidence equals expected accuracy.
    }
    \label{fig:main}
\end{figure}

In order to build user trust in a machine learning system, it is important that the system can accurately express its confidence with respect to its own predictions.  Under the notion of calibration common in deep learning \citep{guo2017oncalibration, minderer2021revisiting}, a confidence estimate output by a model should be as close as possible to the expected accuracy of the prediction.  For instance, a prediction assigned 30\% confidence should be correct 30\% of the time.  
This is especially important in risk-sensitive settings such as medical diagnosis, where binary predictions alone are not useful since a 30\% chance of disease must be treated differently than a 1\% chance.
While advancements in neural network architecture and training have brought improvements in calibration as compared to previous methods \citep{minderer2021revisiting}, neural networks still suffer from miscalibration, usually in the form of overconfidence 
\citep{guo2017oncalibration, wang2021dontbeafraid}.  In addition, these models are often applied to complex data distributions, 
possibly including outliers, and may have different calibration error within and between different subsets in the data \citep{ovadia2019can, perezlebel2023calibration}.  
We illustrate this setting in Figure~\hyperref[fig:main]{\ref*{fig:main}a} with a Reliability Diagram, a tool for visualizing calibration by plotting average confidence against accuracy for bins of datapoints with similar confidence estimates.

To address this calibration error, the confidence estimates of a pre-trained model can be refined using a post-hoc recalibration method like Platt scaling \citep{Platt1999}, temperature scaling \citep{guo2017oncalibration}, or histogram binning \citep{zadrozny2001obtaining}.
Given existing empirical evidence \citep{guo2017oncalibration} and the fact that they are typically fit on small validation sets (on the order of hundreds to a few thousand examples), these ``recalibrators'' usually reduce the input space to the model's logits (e.g., temperature scaling) or predicted class scores (e.g., Platt scaling, histogram binning), as opposed to the high-dimensional and expressive feature embedding space.  
Accordingly, they are generally inexpressive models, having only one or a handful of parameters \citep{Platt1999, guo2017oncalibration, zadrozny2001obtaining, kumar2019verified}.
But the complex data distributions to which neural networks are often applied are difficult to fit well with such simple functions, and calibration error can even be exacerbated for some regions of the input space, especially when the model has only a single scaling parameter (see Figure~\hyperref[fig:main]{\ref*{fig:main}b}). 

Motivated by these observations, we contend that these popular recalibration methods are a natural fit for use with a selection model.  
Selection models \citep{elyaniv2010noisefreeselectiveclassification, geifman2017selective} are used alongside a classifier, and may reject a portion of the classifier's predictions in order to improve some performance metric on the subset of accepted (i.e., unrejected) examples.
While selection models have typically been applied to improve classifier accuracy \citep{geifman2017selective}, they have also been used to improve calibration error by rejecting the confidence estimates of a fixed model \citep{fisch2022selective}.
However, selection alone cannot fully address the underlying miscalibration because it does not alter the confidence output of the model (see Figure~\hyperref[fig:main]{\ref*{fig:main}c}), and the connection between selection and post-hoc recalibration remains largely unexplored.  

In this work we propose \textit{selective recalibration}, where a selection model and a recalibration model are jointly optimized in order to produce predictions with low calibration error.  By rejecting some portion of the data, the system can focus on a region that can be well-captured by a simple recalibration model, leading to a set of predictions with a lower calibration error than under recalibration or selection alone (see Figure~\hyperref[fig:main]{\ref*{fig:main}d}).  
This approach is especially important when a machine learning model is deployed for decision-making in risk-sensitive domains such as healthcare, finance, and law, where a predictor must be well-calibrated if a human expert is to use its output to improve outcomes and avoid causing active harm.
To summarize our contributions:
\begin{itemize}
    \item We formulate selective recalibration, and offer a new loss function for training such a system, Selective Top-Label Binary Cross Entropy (S-TLBCE), which aligns with the typical notion of loss under smooth recalibrators like Platt or temperature scaling models.
    \item We test selective recalibration and S-TLBCE in real-world medical diagnosis and image classification experiments, and find that selective recalibration with S-TLBCE consistently leads to significantly lower calibration error than a wide range of selection and recalibration baselines.
    \item We provide theoretical insight to support our motivations and algorithm.
\end{itemize}

\section{Related Work}\label{sec:rel_work}

Making well-calibrated predictions is a key feature of a reliable statistical model \citep{guo2017oncalibration, pmlr-v80-hebert-johnson18a, minderer2021revisiting, fisch2022selective}.  
Popular methods for improving calibration error given a pretrained model and labeled validation dataset include Platt \citep{Platt1999} and temperature scaling \citep{guo2017oncalibration}, histogram binning \citep{zadrozny2001obtaining}, and Platt binning \citep{kumar2019verified} (as well as others like those in \citet{naeini2015bbq, zhang2020mixnmatch}).  
Loss functions have also been proposed to improve the calibration error of a neural network in training, including Maximum Mean Calibration Error (MMCE) \citep{kumar2018trainablecalibration}, S-AvUC, and SB-ECE \citep{karandikar2021soft}.
Calibration error is typically measured using quantities such as Expected Calibration Error \citep{naeini2015bbq, guo2017oncalibration}, Maximum Calibration Error \citep{naeini2015bbq, guo2017oncalibration}, or Brier Score \citep{Brier1950VERIFICATIONOF} that measure whether prediction confidence matches expected outcomes.  
Previous research \citep{roelofs2020mitigatingbias} has demonstrated that calibration measures calculated using binning have lower bias when computed using equal-mass bins.

Another technique for improving ML system reliability is selective classification \citep{geifman2017selective, elyaniv2010noisefreeselectiveclassification}, wherein a model is given the option to abstain from making a prediction on certain examples (often based on confidence or out-of-distribution measures).  
Selective classification has been well-studied in the context of neural networks \citep{geifman2017selective, giefman2019selective, madras2017learningtodefer}. It has been shown to increase disparities in accuracy across groups \citep{jones2020magnify}, although work has been done to mitigate this effect in both classification \citep{jones2020magnify} and regression \citep{shah2021selectiveregression} tasks by enforcing calibration across groups.  

Recent work by \citet{fisch2022selective} introduces the setting of calibrated selective classification, in which predictions from a pre-trained model are rejected for the sake of improving selective calibration error.  The authors propose a method for training a selective calibration model using an S-MMCE loss function derived from the work of \citet{kumar2018trainablecalibration}.  
Our work differs from this and other previous work by considering the \textit{joint} training and application of selection and recalibration models.  
While \citet{fisch2022selective} apply selection directly to a frozen model's outputs, we contend that the value in our algorithm lies in this joint optimization.  
Also, instead of using S-MMCE, we propose a new loss function, S-TLBCE, which more closely aligns with the objective function for Platt and temperature scaling.   

Besides calibration and selection, there are other approaches to quantifying and addressing the uncertainty in modern neural networks.  
One popular approach is the use of ensembles, where multiple models are trained and their joint outputs are used to estimate uncertainty. 
Ensembles have been shown to both improve accuracy and provide a means to estimate predictive uncertainty without the need for Bayesian modeling \citep{lakshminarayanan2017simple}. Bayesian neural networks (BNNs) offer an alternative by explicitly modeling uncertainty through distributions over the weights of the network, thus providing a principled uncertainty estimation \citep{blundell_weight_2015}.
Dropout can also be viewed as approximate Bayesian inference \citep{gal_uncertainty_2016}.
Another technique which has received interest recently is conformal prediction, which uses past data to determine a prediction interval or set in which future points are predicted to fall with high probability \citep{shafer_tutorial_2008, vovk_defensive_2005}.  Such distribution-free guarantees have been extended to cover a wide set of risk measures \citep{deng2023, snell2022quantile} and applications such as robot planning \citep{ren2023robots} and prompting a large language model \citep{zollo2023}.
\section{Background}

Consider the multi-class classification setting with $K$ classes and data instances $(x,y) \sim \mathcal{D}$, where $x$ is the input and $y \in \{1,2,...,K\}$ is the ground truth class label.  
For a black box predictor $f$, $f(x) \in \mathbbm{R}^{K}$ is a vector where $f(x)_k$ is the predicted probability that input $x$ has label $k$; we denote the confidence in the top predicted label as $\hat{f}(x)=\max_{k} f(x)_k$.  Further, we may access the unnormalized class scores $f_s(x) \in \mathbbm{R}^{K}$ (which may be negative) and the feature embeddings $f_e(x) \in \mathbbm{R}^{d}$.  The predicted class is $\hat{y} = \argmax_k f(x)_k$ and the correctness of a prediction is $y^c = \bm{1}\{y = \hat{y}\}$.  

\subsection{Selective Classification}
In selective classification, a {\it selection model $g$} produces binary outputs, where $0$ indicates rejection and $1$ indicates acceptance.  A common goal is to decrease some error metric by rejecting no more than a $1-\beta$ proportion of the data for given target coverage level $\beta$. 
One popular choice for input to $g$ is the feature embedding $f_e(x)$, although other representations may be used.  Often, a soft selection model $\hat{g}: \mathbbm{R}^d \rightarrow [0,1]$ is trained and $g$ is produced at inference time by choosing a threshold $\tau$ on $\hat{g}$ to achieve coverage level $\beta$ (i.e., $\mathbb{E}[\bm{1}\{\hat{g}(X) \geq \tau \}] = \beta$).

\subsection{Calibration}

The model $f$ is said to be top-label calibrated if $\mathbbm{E}_{\mathcal{D}}[y^c|\hat{f}(x)=p]=p$ for all $p \in [0,1]$ in the range of $\hat f(x)$. To measure deviation from this condition, we calculate expected calibration error (ECE):
\begin{align}
    \text{ECE}_q = \biggl(
    \mathbbm{E}_{\hat f(x)}\biggl[ \Bigl(\Bigl\lvert \mathbbm{E}_{\mathcal{D}}[y^c|\hat f(x)] - \hat f(x) \Bigr\rvert\Bigr)^q \biggr]
    \biggr)^{\frac{1}{q}},
\end{align}
where $q$ is typically 1 or 2.  A {\it recalibrator model $h$} can be 
applied to $f$ to produce outputs in the interval $[0,1]$ such that $h(f(x)) \in \mathbbm{R}^{K}$ is the recalibrated prediction confidence for input $x$ and $\hat{h}(f(x))=\max_{k} h(f(x))_k$.  
See Section~\ref{sec:cal_models} for details on some specific forms of $h(\cdot)$.

\subsection{Selective Calibration}

Under the notion of calibrated selective classification introduced by \citet{fisch2022selective}, a predictor is selectively calibrated if $\mathbbm{E}_{\mathcal{D}}\bigl[y^c|\hat f(x)=p, g(x)=1\bigr]=p$
for all $p \in [0,1]$ in the range of $\hat f(x)$ where $g(x)=1$.
To interpret this statement, for the subset of examples that are accepted (i.e., $g(x)=1$), for a given confidence level $p$ the predicted label should be correct for $p$ proportion of instances. Selective expected calibration error is then calculated as:
\begin{align}
    \text{S-ECE}_q = \biggl(
    \mathbbm{E}_{\hat f(x)}\biggl[ \Bigl(\Bigl\lvert \mathbbm{E}_{\mathcal{D}}[y^c|\hat f(x),g(x)=1] - \hat f(x) \Bigr\rvert\Bigr)^q \mid g(x)=1\biggr]
    \biggr)^{\frac{1}{q}}.
\end{align}
It should be noted that selective calibration is a separate goal from selective accuracy, and enforcing it may in some cases decrease accuracy.  
For example, a system may reject datapoints with $\hat f(x)=0.7$ and {$p(y^c=1|x)=0.99$} (which will be accurate 99\% of the time) in order to retain datapoints with $\hat f(x)=0.7\text{ and } p(y^c=1|x)=0.7$ (which will be accurate 70\% of the time). This will decrease accuracy, but the tradeoff would be acceptable in many applications where probabilistic estimates (as opposed to discrete labels) are the key decision making tool.
See Section~\ref{sec:tradeoffs} for a more thorough discussion and empirical results regarding this potential trade-off.
Here we are only concerned with calibration, and leave 
methods for exploring the Pareto front of selective calibration and accuracy
to future work.
\section{Selective Recalibration}\label{sec:src}

In order to achieve lower calibration error than existing approaches, we propose jointly optimizing a selection model and a recalibration model.  Expected calibration error under both selection and recalibration is equal to
\begin{align}
    \text{SR-ECE}_q = \biggl(
    \mathbbm{E}_{\hat h(f(x))}\biggl[ \Bigl(\Bigl\lvert \mathbbm{E}_{\mathcal{D}}[y^c|\hat{h}(f(x)),g(x)=1] - \hat{h}(f(x)) \Bigr\rvert\Bigr)^q \mid g(x)=1\biggr]
    \biggr)^{\frac{1}{q}}.
\end{align}

Taking $\text{SR-ECE}_q$ as our loss quantity of interest,
our goal in selective recalibration is to solve the optimization problem:
\begin{align}\label{eq:sc_obj}
    \min_{g,h} \text{SR-ECE}_q \text{~~~~s.t.~~~~}\mathbbm{E}_{\mathcal{D}}[g(x)] \geq \beta,
\end{align}
where $\beta$ is our desired coverage level.  

There are several different ways one could approach optimizing the quantity in Eq.~\ref{eq:sc_obj} through selection and/or recalibration.
One could apply only $h$ or $g$, first train $h$ and then $g$ (or vice versa), or jointly train $g$ and $h$ (i.e., selective recalibration).  In \citet{fisch2022selective}, only $g$ is applied; however, as our experiments will show, much of the possible reduction in calibration error comes from $h$.  While $h$ can be effective alone, typical recalibrators are inexpressive, and thus may benefit from rejecting some difficult-to-fit portion of the data (as we find to be the case
in experiments on several real-world datasets in Section \ref{sec:experiments}). 
Training the models sequentially is also sub-optimal, as the benefits of selection with regards to recalibration can only be fully realized if the two models can interact in training, since fixing the first model constrains the available solutions.

Selective recalibration, where $g$ and $h$ are trained together, admits any solution available to these approaches, and can produce combinations of $g$ and $h$ that are unlikely to be found via sequential optimization (we formalize this intuition theoretically via an example in Section \ref{sec:theory}).  Since Eq. \ref{eq:sc_obj} cannot be directly optimized, we instead follow \citet{giefman2019selective} and \citet{fisch2022selective} and define a surrogate loss function $L$ including both a selective error quantity and a term to enforce the coverage constraint (weighted by $\lambda$):
\begin{align}\label{eq:surr}
    \min_{g,h} L = L_{sel}+\lambda L_{cov}(\beta).
\end{align}
We describe choices for $L_{sel}$ (selection loss) and $L_{cov}$ (coverage loss) in Sections~\ref{sec:sel_loss} and \ref{sec:cov_loss}, along with recalibrator models in Section \ref{sec:cal_models}.  
Finally, we specify an inference procedure in Section \ref{sec:inference}, and explain how the model trained with the soft constraint in Eq. \ref{eq:surr} is used to satisfy Eq. \ref{eq:sc_obj}.

\subsection{Selection Loss}\label{sec:sel_loss}

In selective recalibration, the selection loss term measures the calibration of selected instances.  Its general form for a batch of data $D=\{(x_i,y_i)\}_{i=1}^n$ with $n$ examples is
\begin{align}
    L_{sel} = \frac{l(\hat{g},h;f,D) }{\frac{1}{n}\sum_i \hat{g}(x_i)}
\end{align}
where $l$ measures the loss on selected examples and the denominator scales the loss according to the proportion preserved.  
We consider 3 forms of $l$: S-MMCE of \citet{fisch2022selective}, a selective version of multi-class cross entropy, and our proposed selective top label cross entropy loss.

\subsubsection{Maximum Mean Calibration Error}

We apply the S-MMCE loss function proposed in \citet{fisch2022selective} for training a selective calibration system.  For a batch of training data, this loss function is defined as
\begin{equation}
    l_{\text{S-MMCE}}(\hat{g},h;f,D) = \Biggl[\frac{1}{n^2} \sum_{i,j} \big|y^c_i-\hat{h}(f(x_i))\big|^q \big|y^c_j-\hat{h}(f(x_j))\big|^q \hat{g}(x_i) \hat{g}(x_j) \phi \biggl( \hat{h}(f(x_i)),\hat{h}(f(x_j)) \biggr) \Biggr]^{\frac{1}{q}}
\end{equation}
where $\phi$ is some similarity kernel, like Laplacian.  
On a high level, this loss penalizes pairs of instances that have similar confidence and both are far from the true label $y^c$ (which denotes prediction correctness $0$ or $1$).
Further details and motivation for such an objective can be found in \citet{fisch2022selective}.  

\subsubsection{Top Label Binary Cross Entropy}

Consider a selective version of a typical multi-class cross entropy loss:
\begin{equation}
    l_{\text{S-MCE}}(\hat{g},h;f,D)= \frac{-1}{n} \sum_i \hat{g}(x_i) \log h(f(x_i))_{y_i}~.
\end{equation}
In the case that the model is incorrect ($y^c=0$), this loss will penalize based on under-confidence in the ground truth class.  However, our goal is calibration according to the \textit{predicted} class.  Thus we propose a loss function for training a selective recalibration model 
based on the typical approach to optimizing a smooth recalibration model, Selective Top Label Binary Cross Entropy (S-TLBCE):
\begin{equation}
    l_{\text{S-TLBCE}}(\hat{g},h;f,D) = \frac{-1}{n} \sum_i \hat{g}(x_i) \Bigl[ y^c_i \log \hat{h}(f(x_i)) + (1-y^c_i) \log (1-\hat{h}(f(x_i))) \Bigr].
\end{equation}
In contrast to S-MCE, in the case of an incorrect prediction S-TLBCE penalizes based on over-confidence in the predicted label.  This aligns with the established notion of top-label calibration error, as well as the typical Platt or temperature scaling objectives, and makes this a natural loss function for training a selective recalibration model.  We compare S-TLBCE and S-MCE empirically in our experiments, and note that in the binary case these losses are the same.

\subsection{Coverage Loss}\label{sec:cov_loss}

When the goal of selection is improving accuracy, there exists an ordering under $\hat g$ that is optimal for any choice of $\beta$, namely that where $\hat g$ is greater for all correctly labeled examples than it is for any incorrectly labeled example.  Accordingly, coverage losses used to train these systems will often only enforce that \textit{no more than} $\beta$ proportion is to be rejected.
Unlike selective accuracy, selective calibration is not monotonic with respect to individual examples and a mismatch in coverage between training and deployment may hurt test performance.
Thus in selective recalibration we assume the user aims to reject exactly $\beta$ proportion of the data, and employ a coverage loss that targets a specific $\beta$, 
\begin{equation}
L_{cov}(\beta)= \bigl(\beta - \frac{1}{n}\sum_i \hat{g}(x_i)\bigr)^2.    
\end{equation}
Such a loss will be an asymptotically consistent estimator of $(\beta-\E [ \hat{g}(x)])^2$.
Alternatively, \citet{fisch2022selective} use a logarithmic regularization approach for enforcing the coverage constraint without a specific target $\beta$, computing cross entropy between the output of $\hat g$ and a target vector of all ones.  
However, we found this approach to be unstable and sensitive to the choice of $\lambda$ in initial experiments, while our coverage loss enabled stable training at any sufficiently large choice of $\lambda$, similar to the findings of \citet{giefman2019selective}.

\subsection{Recalibration Models}\label{sec:cal_models}

We consider two differentiable and popular calibration models, Platt scaling and temperature scaling, both of which attempt to fit a function between model confidence and output correctness.  The main difference between the models is that Platt scaling works in the output probability space, whereas temperature scaling is applied to model logits before a softmax is taken.  A Platt recalibrator \citep{Platt1999} produces output according to
\begin{equation}
  h^\textrm{Platt}(f(x))=\frac{1}{1+\exp(w f(x) + b)}  
\end{equation}
where $w, b$ are learnable scalar parameters.  On the other hand, a temperature scaling model \citep{guo2017oncalibration} produces output according to
\begin{equation}
    h^\textrm{Temp}(f_s(x))= \text{softmax} \biggl(\frac{f_s(x)}{T}\biggr)
\end{equation}
where $f_s(x)$ is the vector of logits (unnormalized scores) produced by $f$ and $T$ is the single learned (scalar) parameter.  Both models are typically trained with a binary cross-entropy loss, where the labels 0 and 1 denote whether an instance is correctly classified.

\subsection{Inference}\label{sec:inference}

Once we have trained $\hat{g}$ and $h$, we can flexibly account for $\beta$ by selecting a threshold $\tau$ on unlabeled test data (or some other representative tuning set) such that $\mathbb{E}[\bm{1}\{\hat{g}(X) \geq \tau \}] = \beta$.  The model $g$ is then simply $g(x) := \bm{1}\{\hat{g}(x) \geq \tau \}$.  

\subsubsection{High Probability Coverage Guarantees}

Since $\bm{1}\{\hat{g}(x) \geq \tau\}$ is a random variable with a Bernoulli distribution, we may also apply the Hoeffding bound \citep{hoeffing1963probability} to guarantee that with high probability empirical target coverage $\hat{\beta}$ (the proportion of the target distribution where $\hat{g}(x) \geq \tau$) will be in some range.
Given a set $\mathcal{V}$ of $n_u$ i.i.d. unlabeled examples from the target distribution, we denote empirical coverage on $\mathcal{V}$ as $\tilde{\beta}$. With probability at least $1-\delta$, $\hat{\beta}$ will be in the range $[\tilde{\beta}-\epsilon, \tilde{\beta}+\epsilon]$, where
$\epsilon=\sqrt{\frac{\log(\frac{2}{\delta})}{2n_u}}$.
For some critical coverage level $\beta$, $\tau$ can be decreased until $\tilde{\beta}-\epsilon \geq \beta$.
\section{Experiments}\label{sec:experiments}

In this section we examine the performance of selective recalibration and baselines when applied to models pre-trained on real-world datasets and applied to a target distribution possibly shifted from the training distribution. 
In Section~\ref{sec:camelyon17_exp} we investigate whether, given a small validation set of labeled examples drawn i.i.d. from the target distribution, joint optimization consistently leads to a lower empirical selective calibration error than selection or recalibration alone or sequential optimization.  Subsequently, in Section~\ref{sec:ood_exp} we study multiple out-of-distribution prediction tasks and the ability of a single system to provide decreasing selective calibration error across a range of coverage levels when faced with a further distribution shift from validation data to test data.  
We also analyze the trade-off between selective calibration error and accuracy in this setting.

Since we are introducing the objective of selective recalibration here, we focus on high-level design decisions, in particular the choice of selection and recalibration method, loss function, and optimization procedure (joint vs. sequential).
For selective recalibration models, the input to $g$ is the feature embedding. Temperature scaling is used for multi-class examples and Platt scaling is applied in the binary cases (following \citet{guo2017oncalibration} and initial results on validation data). Calibration error is measured using $\text{ECE}_1$ and $\text{ECE}_2$ with equal mass binning.  For the selection loss, we use $l_{\text{S-TLBCE}}$ and $l_{\text{S-MMCE}}$ for binary tasks, and include a selective version of typical multi-class cross-entropy ($l_{\text{S-MCE}}$) for multi-class tasks. 
Pre-training is performed where $h$ is optimized first in order to reasonably initialize the calibrator parameters before beginning to train $g$.
Models are trained both with $h$ fixed after this pre-training (denoted as ``sequential'' in results) and when it is jointly optimized throughout training (denoted as ``joint'' in results).  

Our selection baselines include confidence-based rejection (``Confidence'') and multiple out-of-distribution (OOD) detection methods (``Iso. Forest'', ``One-class SVM''), common techniques when rejecting to improve accuracy.  The confidence baseline rejects examples with the smallest $\hat f(x)$ (or $\hat h(f(x))$), while the OOD methods are measured in the embedding space of the pre-trained model.  
All selection baselines are applied to the recalibrated model in order to make the strongest comparison.    
We make further comparisons to recalibration baselines, including the previously described temperature and Platt scaling as well as binning methods like histogram binning and Platt binning.  
See Appendix~\ref{app:exp_details} for more experiment details including calibration error measurement and baseline implementations.

\subsection{Selective Recalibration with i.i.d. Data}\label{sec:camelyon17_exp}

First, we test whether selective recalibration consistently produces low ECE in a setting where there is a validation set of labeled training data available from the same distribution as test data using outputs of pretrained models on the Camelyon17 and ImageNet datasets. 
Camelyon17 \citep{bandi2017camelyon} is a task where the input $x$ is a 96x96 patch of a whole-slide image of a lymph node section from a patient with potentially metastatic breast cancer and the label $y$ is whether the patch contains a tumor.
Selection and recalibration models are trained with 1000 samples, and we apply a Platt scaling $h$ since the task is binary.
ImageNet is a well-known large scale image classification dataset, where we use 2000 samples from a supervised ResNet34 model for training selection and recalibration models, and temperature scaling $h$ since the task is multi-class.
Our soft selector $\hat g$ is a shallow fully-connected network (2 hidden layers with dimension 128), and we report selective calibration error for coverage level $\beta \in \{0.75, 0.8, 0.85, 0.9\}$.
Full experiment details, including model specifications and training parameters, can be found in Appendix~\ref{app:exp_details}.

\begin{figure}[!ht]
\centering
    \includegraphics[width=\textwidth]{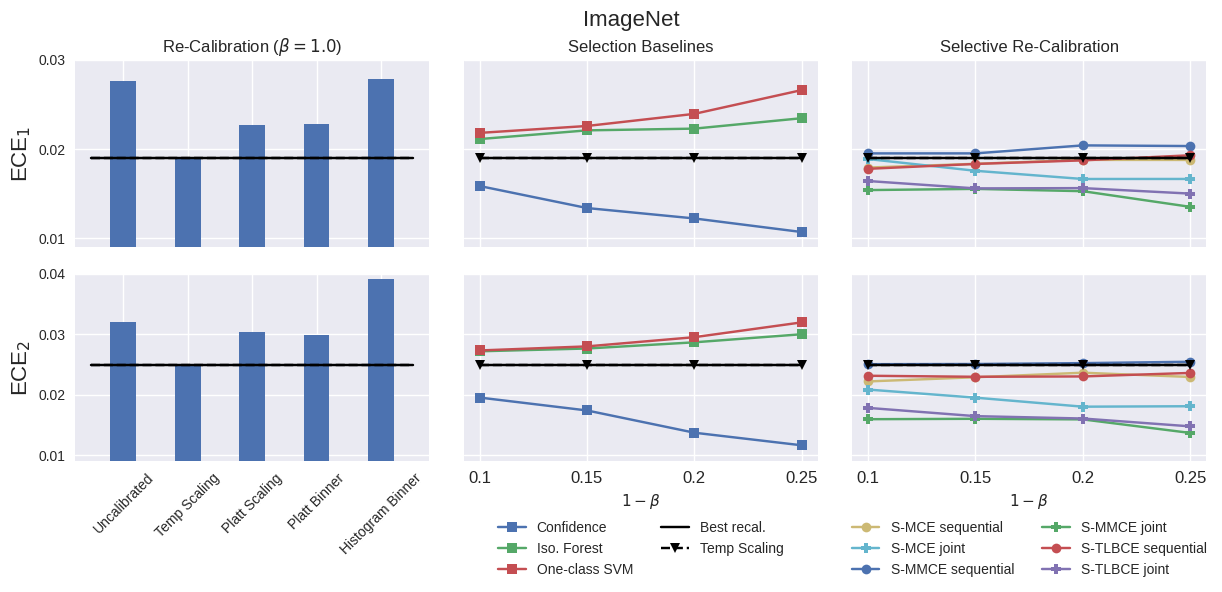}
    \includegraphics[width=\textwidth]{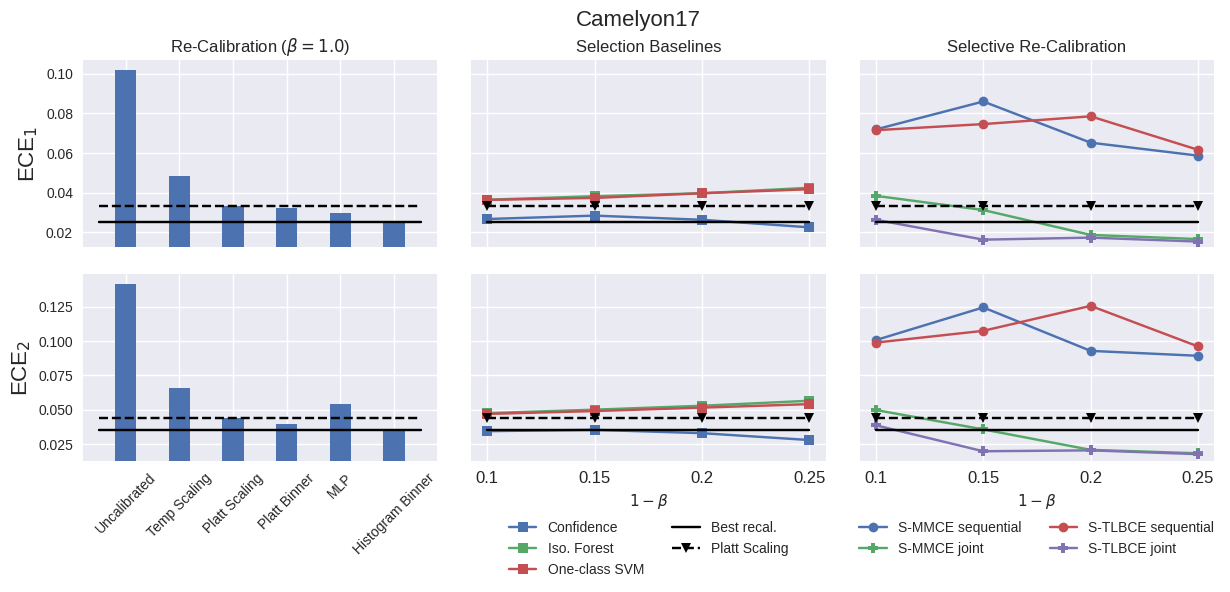}
    \caption{Selective calibration error on ImageNet and Camelyon17 for coverage level $\beta \in \{0.75, 0.8, 0.85, 0.9\}$. \textbf{Left}: Various recalibration methods are trained using labeled validation data. \textbf{Middle}: Selection baselines including confidence-based rejection and various OOD measures. \textbf{Right}: Selective recalibration with different loss functions.}
    \label{fig:id_results}
\end{figure}

Results are displayed in Figure~\ref{fig:id_results}.  They show that by jointly optimizing the selector and recalibration models, we are able to achieve improved ECE at the given coverage level $\beta$ compared to first training $h$ and then $g$.  We also find selective recalibration achieves the lowest ECE in almost every case in comparison to recalibration alone. While the confidence-based selection strategy performs well in these experiments, this is not a good approach to selective calibration in general,
as this is a heuristic strategy and may fail in cases where a model's confident predictions are in fact poorly calibrated (see Section~\ref{sec:ood_exp} for examples).
In addition, the S-TLBCE loss shows more consistent performance than S-MMCE, as it reduces ECE in every case, whereas training with S-MMCE increase calibration error in some cases.

To be sure that the lower calibration error as compared to recalibration is not because of the extra parameters in the selector model, we also produce results for a recalibration model with the same architecture as our selector.  Once again the input is the feature embedding, and the model $h$ has 2 hidden layers with dimension 128.  Results for Camelyon17 are included in Figure~\ref{fig:id_results}; for clarity of presentation of ImageNet results, we omit the MLP recalibrator results from the plot as they were an order of magnitude worse than all other methods ($\text{ECE}_1$ 0.26, $\text{ECE}_2$ 0.30).  In neither case does this baseline reach the performance of the best low-dimensional recalibration model.

\subsection{Selective Re-Calibration under Distribution Shift}\label{sec:ood_exp}

In this experiment, we study the various methods applied to genetic perturbation classification with RxRx1, as well as zero-shot image classification with CLIP and CIFAR-100-C.  
RxRx1 \citep{taylor2019rxrx1} is a task where the input $x$ is a 3-channel image of cells obtained by fluorescent microscopy, the label $y$ indicates which of 1,139 genetic treatments (including no treatment) the cells received, and there is a domain shift across the batch in which the imaging experiment was run. 
CIFAR-100 is a well-known image classification dataset, and we perform zero-shot image classification with CLIP.
We follow the setting of \citet{fisch2022selective} where the test data is drawn from a shifted distribution with respect to the validation set and the goal is not to target a specific $\beta$, but rather to train a selector that works across a range of coverage levels.
In the case of RxRx1 the strong batch processing effect leads to a 9\% difference in pretrained model accuracy between validation (18\%) and test (27\%) output, and we also apply a selective recalibration model trained on validation output from zero-shot CLIP and CIFAR-100 to test examples drawn from the OOD CIFAR-100-C test set.  
Our validation sets have 1000 (RxRx1) or 2000 (CIFAR-100) examples, $\hat g$ is a small network with 1 hidden layer of dimension 64, and we set $\beta=0.5$ when training the models.  
For our results we report the area under the curve (AUC) for the coverage vs. error curve, a typical metric in selective classification \citep{geifman2017selective, fisch2022selective} that reflects how a model can reduce the error on average at different levels of $\beta$.  We measure AUC in the range $\beta=[0.5, 1.0]$, with measurements taken at intervals of $0.05$ (i.e., $\beta \in [0.5, 0.55, 0.6,...,0.95, 1.0]$).
Additionally, to induce robustness to the distribution shift we noise the selector/recalibrator input.  
See Appendix~\ref{app:exp_details} for full specifications.

\begin{table}[!ht]
    \centering
    \caption{
    RxRx1 and CIFAR-100-C AUC in the range $\beta=[0.5, 1.0]$.
    }
    \begin{tabular}{llcccccc}
    \toprule
        ~ & ~ & \multicolumn{3}{c}{RxRx1} & \multicolumn{3}{c}{CIFAR-100-C} \\
        Selection & Opt. of $h, g$ &  $\text{ECE}_1$ &  $\text{ECE}_2$ & Acc. &  $\text{ECE}_1$ &  $\text{ECE}_2$ & Acc. \\
    \midrule
    Confidence &           - &        0.071 &        0.081 &            \textbf{0.353} &            0.048 &            0.054 &                \textbf{0.553} \\
    One-class SVM &           - &        0.058 &        0.077 &           0.227 &            0.044 &            0.051 &               0.388 \\
    Iso. Forest &           - &        0.048 &        0.061 &           0.221 &            0.044 &            0.051 &               0.379 \\
    S-MCE &       sequential &        0.059 &        0.075 &           0.250 &            0.033 &            0.041 &               0.499 \\
     &       joint &        0.057 &        0.073 &           0.249 &            0.060 &            0.068 &               0.484 \\
    S-MMCE &       sequential &         \textbf{0.036} &         \textbf{0.045} &           0.218 &            0.030 &            0.037 &               0.503 \\
     &       joint &        \textbf{0.036} &         \textbf{0.045} &           0.218 &            0.043 &            0.051 &               0.489 \\
    S-TLBCE &       sequential &         \textbf{0.036} &         \textbf{0.045} &           0.219 &            0.030 &            0.037 &               0.507 \\
     &       joint &        0.039 &        0.049 &           0.218 &             \textbf{0.026} &             \textbf{0.032} &               0.500 \\
    \midrule
    Recalibration & Temp. Scale &        0.055 &        0.070 &           0.274 &            0.041 &            0.047 &               0.438 \\
    ($\beta=1.0$) & None &        0.308 &        0.331 &           0.274 &            0.071 &            0.079 &               0.438 \\
    \bottomrule
    \end{tabular}
    \label{tab:ood_exp}
\end{table}

Results are shown in Table~\ref{tab:ood_exp}.  First, these results highlight that even in this OOD setting, 
the selection-only approach of \citet{fisch2022selective}
is not enough and recalibration is a key ingredient in improving selective calibration error.  Fixing $h$ and then training $g$ performs better than joint optimization for RxRx1, likely because the distribution shift significantly changed the optimal temperature for the region of the feature space where $g(x)=1$.  Joint optimization performs best for CIFAR-100-C, and does still significantly improve ECE on RxRx1, although it's outperformed by fixing $h$ first in that case.  The confidence baseline performs quite poorly on both experiments and according to both metrics, significantly increasing selective calibration error in all cases.

\subsubsection{Trade-offs Between Calibration Error and Accuracy}\label{sec:tradeoffs}

While accurate probabilistic output is the only concern in some domains and should be of at least some concern in most domains, discrete label accuracy can also be important in some circumstances.  Table~\ref{tab:ood_exp} shows accuracy results under selection, and Figure~\ref{fig:acc_conf_trade} shows the selective accuracy curve and confidence histogram for our selective recalibration model trained with S-TLBCE for RxRx1 and CIFAR-100 (and applied to shifted distributions). Together, these results illustrate that under different data and prediction distributions, selective recalibration may increase or decrease accuracy.  For RxRx1, the model tends to reject examples with higher confidence, which also tend to be more accurate.  Thus, while $\text{ECE@}\beta$ may improve with respect to the full dataset, $\text{selective accuracy at }\beta$ is worse.  On the other hand, for CIFAR-100-C, the model tends to reject examples with lower confidence, which also tend to be less accurate.  Accordingly, both $\text{ECE@}\beta$ and $\text{selective accuracy at }\beta$ improve with respect to the full dataset.  

\begin{figure}[!ht]
\centering
    \includegraphics[width=0.49\textwidth]{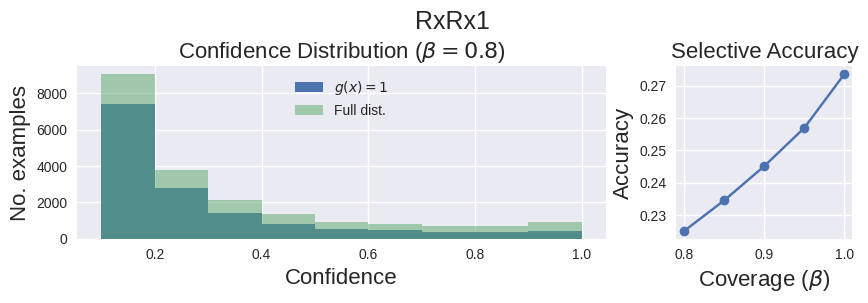}
    \includegraphics[width=0.49\textwidth]{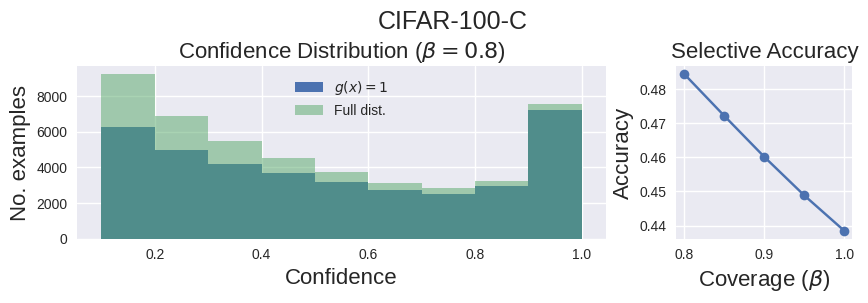}
    \caption{Plots illustrating 1) distribution of confidence among the full distribution and those examples accepted for prediction (i.e., where $g(x)=1$) at coverage level $\beta=0.8$ and 2) selective accuracy in the range $\beta=[0.8,1.0]$.}
    \label{fig:acc_conf_trade}
\end{figure}
\section{Theoretical Analysis}
\label{sec:theory}

To build a deeper understanding of selective recalibration (and its alternatives), we consider a theoretical situation where a pre-trained model is applied to a target distribution different from the distribution on which it was trained, mirroring both our experimental setting and a common challenge in real-world deployments.  We show that with either selection or recalibration alone there will still be calibration error, while selective recalibration can achieve $\text{ECE}=0$.  We also show 
that joint optimization of $g$ and $h$, as opposed to sequentially optimizing each model, is necessary to achieve zero calibration error.

\subsection{Setup}

We consider a setting with two classes, 
and without loss of generality we set $y\in\{-1,1\}$. 
We are given a classifier pre-trained on a mixture model, a typical way to view the distribution of objects in images \citep{Zhu2014CapturingLD, thulasidasan_mixup_2020}. 
The pre-trained classifier is then applied to a target distribution containing a portion of outliers from each class unseen during training.
Our specific choices of classifier and training and target distributions are chosen for ease of interpretation and analysis; however, the intuitions built can be applied more generally,
for example to neural network classifiers, which are too complex for such analysis but are often faced with outlier data on which calibration is poor \citep{ovadia2019can}.

\begin{figure}[!ht]
\centering
    \includegraphics[width=0.6\textwidth]{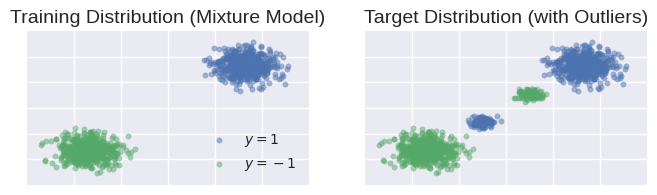}
    \caption{A classifier pre-trained on a mixture model is applied to a target distribution with outliers. 
    }
    \label{fig:theory_dist}
\end{figure}

\subsubsection{Data Generation Model}

\begin{definition}[Target Distribution]
\label{model}
The target distribution is defined as
a $(\theta^*,\sigma,\alpha)$-perturbed mixture model
over $(x,y)\in \bR^p\times\{1,-1\}$: 
$x\mid y, z\sim zJ_1+(1-z)J_2$, where $y$ follows the Bernoulli distribution $\bP(y=1)=\bP(y=-1)=1/2$, $z$ follows a Bernoulli distribution $\bP(z=1)= \beta$, and $z$ is independent of $y$.
\end{definition}

Our data model considers a mixture of two distributions with disjoint and bounded supports, $J_1$ and $J_2$, where $J_2$ is considered to be an outlier distribution.
Specifically, for $y\in\{-1,1\}$, $J_1$ is supported in the balls with centers $y \theta^*$ and radius $r_1$, $J_2$ is supported in the balls with centers $-y \alpha \theta^*$ and radius $r_2$, and both $J_1$ and $J_2$ have standard deviation $\sigma$.  
See Figure~\ref{fig:theory_dist} for an illustration of our data models, and Appendix~\ref{app:data} for a full definition of the distribution.

\subsubsection{Classifier Algorithm} 

Recall that in our setting $f$ is a pre-trained model, where the training distribution is unknown and we only have samples from some different target distribution. We follow this setting in our theory by considering a (possibly biased) estimator $\hat\theta$ of $\theta^*$, which is the output of a training algorithm $\mathscr{A}(S^{tr})$ that takes the i.i.d. training data set $S^{tr}=\{(x^{tr}_i,y^{tr}_i)\}_{i=1}^m$ as input. 
The distribution from which $S^{tr}$ is drawn is \textbf{\textit{different}} from the target distribution from which we have data to train the selection and recalibration models.  
We only impose one assumption on the model $\hat\theta$: that $\mathscr{A}$ outputs a $\hat\theta$ that will converge to $\theta_0$ if training data is abundant enough and $\theta_0$ should be aligned with $\theta^*$ with respect to direction (see Assumption~\ref{as:theta}, Appendix~\ref{app:theta} for formal statement).
For ease of analysis and explanation, we consider a simple classifier defined by
$\hat\theta=\sum_{i=1}^m x^{tr}_i y^{tr}_i/m$ when the training distribution is set to be an unperturbed Gaussian mixture $x^{tr}|y^{tr}\sim\cN(y^{tr}\cdot\theta^*,\sigma^2 I)$ and $y^{tr}$ follows a Bernoulli distribution $\bP(y^{tr}=1)=1/2$.\footnote{The in-distribution case also works under our data generation model.}  This form of $\hat\theta$ is closely related to 
Fisher's rule in linear discriminant analysis for Gaussian mixtures (see Appendix~\ref{subsec:example} for further discussion).

Having obtained $\hat \theta$, our pretrained classifier aligns with the typical notion of a softmax response in neural networks.
We first obtain the confidence vector $f(x)=(f_{1}(x),f_{-1}(x))^\top$, where 
\begin{equation}\label{eq:score}
f_{-1}(x)=\frac{1}{e^{2\hat{\theta}^\top x}+1},~~~f_{1}(x)=\frac{e^{2\hat{\theta}^\top x}}{e^{2\hat{\theta}^\top x}+1}.
\end{equation}
and then output $\hat y=\argmax_{k\in \{-1,1\}} f_k(x)$.
For $k\in \{-1,1\}$, the confidence score $f_k(x)$ represents an estimator of $\bP(y=k|x)$ and the final classifier is equivalent to 
$\hat y=\sgn(\hat\theta^\top x).$

\subsection{Main Theoretical Results}

Having established our data and classification models, we now analyze why selective recalibration (i.e., joint training of $g$ and $h$) can outperform recalibration and selection performed alone or sequentially.
To measure calibration error, we consider $\ECE_q$ with $q=1$ (and drop the subscript $q$ for notational simplicity below).  For the clarity of theorem statements and proofs, we will restate definitions of calibration error to make them explicitly dependent on selection model $g$ and temperature $T$ and tailored for the binary case we are studying. We want to emphasize that we are \textit{\textbf{not} introducing new concepts}, but instead offering different surface forms of the same quantities introduced earlier.  First, we notice that under our data generating model and pretrained classifier, ECE can be expressed as
$$\ECE=\bE_{\hat\theta^\top x}\bigg[\Big|\bP[y= 1 \mid \hat\theta^\top x =v]-\frac{1}{1+e^{-2v}}\Big|\bigg].$$
By studying such population quantities, our analysis is not dependent on any binning-methods that are commonly used in empirically calculating expected calibration errors.

\subsubsection{Selective Recalibration v.s.~Recalibration or Selection }

We study the following ECE quantities according to our data model for recalibration alone (R-ECE), selection alone (S-ECE), and selective recalibration (SR-ECE). For recalibration, we focus on studying the popular temperature scaling model, although the analysis is nearly identical for Platt scaling.

$$\RECE(T)=\bE_{\hat\theta^\top x}\bigg[\Big|\bP[y= 1 \mid \frac{\hat\theta^\top x}{T} =v]-\frac{1}{1+e^{-2v}}\Big|\bigg].$$
$$\SECE(g):=\E_{\hat\theta^\top x} \bigg[\Big| \bP[y=1 \mid \hat\theta^\top x=v,g(x)=1]- \frac{1}{1+e^{-2v}} \Big| \mid g(x)=1\bigg].$$
$$\SRECE(g,T):=\E_{\hat\theta^\top x} \bigg[\Big| \bP[y= 1 \mid \frac{\hat\theta^\top x}{T} =v,g(x)=1]- \frac{1}{1+e^{-2v}}\Big| \mid g(x)=1\bigg].$$
 
Our first theorem proves that under our data generation model, S-ECE and R-ECE can never reach $0$, but SR-ECE can reach $0$ by choosing appropriate $g$ and $T$.

\begin{theorem}\label{thm:joint}
Under Assumption~\ref{as:theta}, for any $\delta\in(0,1)$ and $\hat\theta$ output by $\mathscr{A}$, there exist thresholds $M\in\bN^+$ and $\tau>0$ such that if $\max\{r_1,r_2,\sigma,\|\theta^* \|\}<\tau$ and $m>M$, there exists a positive lower bound $L$, with probability at least $1-\delta$ over $S^{tr}$
$$\min\big\{\min_{g:\bE[g(x)]\ge \beta} \SECE(g),~~ \min_{T\in\bR}\RECE(T)\big\}>L. $$
However, there exists $T_0$ and $g_0$ satisfying $\bE[g_0(x)]\ge \beta$, such that $\SRECE(g_0,T_0)=0.$
\end{theorem}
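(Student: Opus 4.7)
My plan is to reduce the analysis to the one-dimensional projection $v = \hat\theta^\top x$, using Assumption~\ref{as:theta} to ensure that for $m$ large enough and with probability at least $1-\delta$ over $S^{tr}$, $\hat\theta$ lies close to a direction aligned with $\theta^*$. Under this reduction, the projected target distribution forms four clusters on the real line: inliers with $y=\pm 1$ concentrated near $\pm\hat\theta^\top\theta^*$ (total mass $\beta$) and outliers with $y=\pm 1$ concentrated near $\mp\alpha\hat\theta^\top\theta^*$ (total mass $1-\beta$). The smallness hypothesis $\max\{r_1,r_2,\sigma,\|\theta^*\|\}<\tau$ keeps all four centers small in magnitude and will drive both lower bounds.

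For $\min_T \RECE(T) > L$, the key observation is that $\frac{1}{1+e^{-2v/T}}$ is strictly monotone in $v$, while the true conditional $\bP[y=1\mid v]$ is \emph{large} at both $+\hat\theta^\top\theta^*$ (inlier, $y=1$) and $-\alpha\hat\theta^\top\theta^*$ (outlier, $y=1$). Since these two $v$-values have opposite signs, no monotone sigmoid can be close to $1$ at both; evaluating the ECE integrand at the outlier cluster extracts a positive lower bound depending on $\alpha$, $\|\theta^*\|$, and the outlier mass $1-\beta$. For $\min_g \SECE(g) > L$ over $g$ with $\bE[g(x)]\ge\beta$, I would use that selection cannot alter the confidence function: at $T=1$ and $\|\theta^*\|<\tau$ small, the sigmoid $\frac{1}{1+e^{-2v}}$ is close to $\frac{1}{2}$ at every inlier cluster, whereas the true conditional there is close to $\{0,1\}$ by disjointness. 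The coverage constraint forces $g$ to retain non-negligible mass from either the inlier or the outlier clusters, each of which contributes an integrand bounded away from zero; setting $L$ to be the minimum of these two contributions gives the claim.

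For the construction of $(g_0,T_0)$ achieving $\SRECE(g_0,T_0)=0$, I would take $g_0$ to be the indicator of the inlier support, so that $\bE[g_0(x)]=\beta$ exactly and the problem reduces to calibrating the projected inlier distribution. By the $y$-symmetry of $J_1$, the restricted conditional equals $p_1(v)/(p_1(v)+p_1(-v))$; for a projected $J_1$ with logit-linear log-density whose slope is proportional to $\hat\theta^\top\theta^*/s^2$, the temperature $T_0 = s^2/(\hat\theta^\top\theta^*)$ makes this ratio coincide with $\frac{1}{1+e^{-2v/T_0}}$ pointwise, yielding $\SRECE(g_0,T_0)=0$.

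The main obstacle is the exact-zero claim in the last step: matching the sigmoid pointwise requires that the projected inlier density has the correct exponential-tilt form on the selected region, and the bounded support of $J_1$ must be handled so that no tail region has deterministic label while the sigmoid is strictly in $(0,1)$. I expect the proof either to restrict $g_0$ further to the overlap of the $\pm$-supports (nonempty under the smallness assumption on $\|\theta^*\|$ relative to $r_1$) or to invoke the precise parametric form of $J_1$ from the appendix; everything else is a relatively routine computation once the right density shape is in hand.
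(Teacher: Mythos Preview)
Your proposal is essentially correct and reaches the same endpoint as the paper, but your $\RECE$ argument takes a different route. The paper does a three-case analysis on $T^{-1}$ relative to the ``correct'' inlier slope $\hat\theta^\top\theta^*/(\sigma^2\|\hat\theta\|^2)$: if $T^{-1}$ is far from this value (Cases~1--2), the inlier ball $\bB(\hat\theta^\top\theta^*,r_1\|\hat\theta\|)$ contributes a positive error; if $T^{-1}$ is close (Case~3), the outlier ball contributes. Your monotonicity argument---the true conditional exceeds $1/2$ at $v$-values of both signs, which no sigmoid $1/(1+e^{-2v/T})$ can match---is more conceptual and shorter, though it gives a less explicit constant than the paper's $\min\{\beta_1,\beta_2,\beta_3\}$.

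Two points to sharpen. First, in your $\SECE$ step, the phrase ``close to $\{0,1\}$ by disjointness'' is misleading: in the formal model the inlier supports for $y=1$ and $y=-1$ are \emph{not} disjoint---both equal the union $\bB(\theta^*,r_1)\cup\bB(-\theta^*,r_1)$. The conditional is close to $\{0,1\}$ on $\cA$ because the density ratio is a sigmoid with slope $\hat\theta^\top\theta^*/(\sigma^2\|\hat\theta\|^2)$, made large by taking $\sigma$ small; the paper states this as the requirement $I_1/\sigma^2>1$. Your conclusion stands, but the mechanism is steep slope, not disjointness.

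Second, your worry about the $\SRECE=0$ construction dissolves once you use the formal appendix definition of $J_1$: since both labels share the union support, there is no region where the label is deterministic, and the conditional on $\cA$ is \emph{exactly} $1/(1+\exp(-2v\hat\theta^\top\theta^*/(\sigma^2\|\hat\theta\|^2)))$ throughout. So $g_0=\bm{1}\{\cdot\in\cA\}$ (coverage $\beta$) together with $T_0^{-1}=\hat\theta^\top\theta^*/(\sigma^2\|\hat\theta\|^2)$ gives $\SRECE(g_0,T_0)=0$ without any further restriction of the selected region; your anticipated fallback to ``the precise parametric form from the appendix'' is exactly what the paper does.
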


\paragraph{Intuition and interpretation.} Here we give some intuition for understanding our results. Under our perturbed mixture model, R-ECE is calculated as
\begin{align*}
\RECE(T)=\bE_{v=\hat\theta^\top x}\left|\frac{\bm{1}\{v\in \cA\}}{1+\exp\left(\frac{-2\hat\theta^\top\theta^*}{\sigma^2\|\hat\theta\|^2}\cdot v\right)}+ \frac{\bm{1}\{v\in \cB\}}{1+\exp\left(\frac{2\alpha\hat\theta^\top\theta^*}{\sigma^2\|\hat\theta\|^2}\cdot v\right)}-\frac{1}{e^{-2v/T}+1}\right|
\end{align*}
for disjoint sets $\cA$ and $\cB$, which correspond to points on the support of $J_1$ and $J_2$ respectively.
In order to achieve zero R-ECE, when $v\in\cA$, we need $T=\hat{\theta}^\top\theta^*/(\sigma^2\|\hat\theta\|^2)$. However, for $v\in \cB$ we need $T=-\alpha\hat{\theta}^\top\theta^*/(\sigma^2\|\hat\theta\|^2)$. These clearly cannot be achieved simultaneously.  Thus the presence of the outlier data makes it impossible for the recalibration model to properly calibrate the confidence for the whole population.
A similar expression can be obtained for S-ECE. 
As long as $\hat{\theta}^\top\theta^*/(\sigma^2\|\hat\theta\|^2)$ and $-\alpha\hat{\theta}^\top\theta^*/(\sigma^2\|\hat\theta\|^2)$ are far from $1$ (i.e., miscalibration exists), no choice of $g$ can reach zero S-ECE.  In other words, no selection rule alone can lead to calibrated predictions, since no subset of the data is calibrated under the pre-trained classifier.
However, by setting $g_0(x)=0$ for all $x\in \cB$ and $g_0(x)=1$ otherwise, and choosing $T_0=\hat{\theta}^\top\theta^*/(\sigma^2\|\hat\theta\|^2)$, $\text{SR-ECE}=0$. 
Thus we can conclude that achieving $\text{ECE}=0$ on eligible predictions is only possible under selective recalibration, while selection or recalibration alone induce positive ECE.
See Appendix~\ref{app:theory} for more details and analysis.  

\subsubsection{Joint Learning versus Sequential Learning}

We can further demonstrate that jointly learning a selection model $g$ and temperature scaling parameter $T$ can outperform sequential learning of $g$ and $T$.  Let us first denote 
$\tilde g:=\argmin_g \SECE(g)$ such that $\bE[\tilde g(x)]\ge\beta$ and $\tilde T:=\argmin_T \RECE(T)$. We denote two types of expected calibration error under sequential learning of $g$ and $T$, depending on which is optimized first. 
\begin{align*}
&\ECERS:=\min_{g:\bE[g(x)]\ge\beta}\SECE(g, \tilde T);\\
&\ECESR:=\min_{T\in\bR}\RECE(\tilde g, T).
\end{align*}

Our second theorem shows these two types of expected calibration error for sequential learning are lower bounded, while jointly learning $g,T$ can reach zero calibration error.

\begin{theorem}\label{thm:seqvsjoint}
Under Assumption~\ref{as:theta}, if $\beta>2(1-\beta)$, for any $\delta\in(0,1)$ and $\hat\theta$ output by $\mathscr{A}$, there exist thresholds $M\in\bN^+$ and $\tau_2>\tau_1>0$: if $\max\{r_1,r_2,\sigma\}<\tau_2$, $\tau_1<\sigma$, and $m>M$, then there exists a positive lower bound $L$, with probability at least $1-\delta$ over $S^{tr}$
$$\min\big\{\ECERS,\ECESR\}>L. $$
However, there exists $T_0$ and $g_0$ satisfying $\bE[g_0(x)]\ge \beta$, such that
$\SRECE(g_0,T_0)=0.$
\end{theorem}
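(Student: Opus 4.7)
The plan is to split the claim into two parts: (i) the existence of $(g_0,T_0)$ with $\SRECE(g_0,T_0)=0$, and (ii) strictly positive lower bounds on both $\ECERS$ and $\ECESR$, so that $L:=\min\{L_1,L_2\}>0$. Part (i) is essentially free: reuse $g_0(x)=\bm{1}\{x\notin\mathcal{B}\}$ and $T_0=\hat\theta^\top\theta^*/(\sigma^2\|\hat\theta\|^2)$ from Theorem~\ref{thm:joint}'s construction. Admissibility holds since $\bE[g_0(x)]=\beta$, and on the retained set the true conditional $1/(1+e^{-av})$ with $a=2\hat\theta^\top\theta^*/(\sigma^2\|\hat\theta\|^2)$ is matched exactly by the temperature-scaled confidence $1/(1+e^{-2v/T_0})$, so the integrand in $\SRECE$ vanishes pointwise.

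For the lower bound on $\ECERS$, write $T_A=\hat\theta^\top\theta^*/(\sigma^2\|\hat\theta\|^2)$ for the inlier-optimal temperature. Using the closed-form $\RECE(T)$ displayed in the intuition after Theorem~\ref{thm:joint}, show that $\tilde T:=\arg\min_T\RECE(T)$ is bounded away from $T_A$: the derivative of $\RECE(T)$ at $T=T_A$ is driven by the outlier integrand (which has positive mass $1-\beta$ and cannot be simultaneously zeroed by $T_A$), so $|\tilde T-T_A|\ge c_1>0$ uniformly once $m>M$ forces $\hat\theta$ to stabilize near $\theta_0$ via Assumption~\ref{as:theta}. The coverage condition combined with $\beta>2(1-\beta)$ then forces any admissible $g$ to retain at least $(2\beta-1)/\beta>0$ of the inlier mass—if $g$ rejected more, even accepting every outlier would yield total mass below $\beta$. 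Applying $\tilde T\ne T_A$ to this retained inlier slice produces nonzero pointwise miscalibration on a positive-measure set, giving $\ECERS\ge L_1>0$.

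For the lower bound on $\ECESR$, characterize $\tilde g:=\arg\min_g\SECE(g)$ via a Neyman--Pearson-type argument. Since $J_1$ and $J_2$ have disjoint $x$-supports, at each $v$-slice the conditional $\bP[y=1\mid v]$ is intrinsic (equals $1/(1+e^{-av})$ on $\mathcal{A}$ and $1/(1+e^{bv})$ on $\mathcal{B}$ with $b=2\alpha\hat\theta^\top\theta^*/(\sigma^2\|\hat\theta\|^2)$) and independent of the choice of $g$. Thus the S-ECE problem reduces to minimizing $\int g(v)I(v)p(v)\,dv/\int g(v)p(v)\,dv$ subject to $\int g(v)p(v)\,dv\ge\beta$, where $I$ denotes the pointwise raw miscalibration. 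By a standard fractional-linear-programming argument, $\tilde g$ is a threshold rule $\bm{1}\{I(\hat\theta^\top x)<\tilde\theta\}$. Choose $\tau_1,\tau_2$ so that the regime $\tau_1<\sigma<\tau_2$ forces the ranges of $I_A(v)$ on $\mathcal{A}$ and $I_B(v)$ on $\mathcal{B}$ to overlap and the threshold $\tilde\theta$ corresponding to coverage $\beta$ to lie within both ranges. Then $\tilde g$ retains positive-measure portions of both $\mathcal{A}$ and $\mathcal{B}$; the surviving conditional is piecewise-sigmoidal with coefficients $a$ and $-b$ of opposite sign, and by the same one-temperature-cannot-fit-two-sigmoids obstruction used in the intuition for Theorem~\ref{thm:joint}, no $T\in\bR$ can drive $\SRECE(\tilde g,T)$ to zero, giving $\ECESR\ge L_2>0$.

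The main obstacle is the Neyman--Pearson step for $\ECESR$: unlike in the joint setting, where one can consciously pick $g_0$ to produce a single clean sigmoid conditional, the S-ECE objective is myopic and its threshold rule mixes inliers and outliers whenever the $I_A$ and $I_B$ ranges overlap. Showing that the parameter regime actually produces this overlap—and that the $\beta$-level threshold falls inside it rather than lying entirely in the inlier range (which would reduce $\tilde g$ to the joint-optimal $g_0$ and make $\ECESR=0$)—requires a quantitative comparison of $I_A$ and $I_B$ across $\mathcal{A}\cup\mathcal{B}$ and is where the assumption $\beta>2(1-\beta)$, the lower bound $\sigma>\tau_1$ (so that $a\ne2$ and inliers carry genuine miscalibration), and the upper bound $\sigma<\tau_2$ all work together. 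The $\ECERS$ half, by contrast, is essentially a direct consequence of the Theorem~\ref{thm:joint} closed-form analysis plus the coverage arithmetic enabled by $\beta>2(1-\beta)$.
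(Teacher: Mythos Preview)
Your treatment of part~(i) and of $\ECERS$ tracks the paper: the joint optimum $(g_0,T_0)$ is exactly the one you name, and for $\ECERS$ the paper likewise differentiates $\RECE$ in $1/T$ at $T_A:=\hat\theta^\top\theta^*/(\sigma^2\|\hat\theta\|^2)$, shows the derivative does not vanish (so $\tilde T\neq T_A$), and then invokes $\beta>2(1-\beta)$ to force any admissible $g$ to retain positive inlier mass, on which $\tilde T$ is miscalibrated.

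For $\ECESR$ your route departs from the paper and is more delicate than necessary. You aim to engineer parameters so the pointwise-miscalibration ranges on $\cA$ and $\cB$ \emph{overlap} and then place the coverage-$\beta$ threshold inside that overlap. The paper instead chooses the regime where the ranges are \emph{strictly separated with inliers uniformly more miscalibrated than outliers}: it establishes (Claim~\ref{claim:1})
\[
\min_{v\in\bB(\hat\theta^\top\theta^*,\,r_1\|\hat\theta\|)}\Bigl|\tfrac{1}{1+e^{-av}}-\tfrac{1}{1+e^{-2v}}\Bigr|\;>\;\max_{v\in\bB(-\alpha\hat\theta^\top\theta^*,\,r_2\|\hat\theta\|)}\Bigl|\tfrac{1}{1+e^{bv}}-\tfrac{1}{1+e^{-2v}}\Bigr|.
\]
Under this ordering your own Neyman--Pearson threshold rule rejects \emph{inlier} mass first and therefore retains \emph{all} of $\cB$---the opposite of what $g_0$ does. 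Since the rejection budget $1-\beta$ is strictly smaller than the inlier mass $\beta$, $\tilde g$ must also retain at least $2\beta-1>0$ mass in $\cA$; both sigmoid pieces survive and the one-temperature obstruction from Theorem~\ref{thm:joint} gives $\ECESR>0$. This dissolves the obstacle you flag: no overlap needs to be produced or located, and the scenario $\tilde g=g_0$ (which would require $I_B>I_A$ uniformly, not ``threshold in the inlier range'') is excluded by construction. Your fractional-programming reduction is a clean way to see that $\tilde g$ is a level set of $I$; the paper reaches the same structure by a direct mass-swapping argument, but the real simplification is choosing the separated regime $I_A>I_B$ rather than an overlap.
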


\paragraph{Intuition and interpretation.}

If we first optimize the temperature scaling model to obtain $\tilde T$, $\tilde T$ will not be equal to $\hat{\theta}^\top\theta^*/(\sigma^2\|\hat\theta\|^2)$. 
Then, when applying selection, there exists no $g$ that can reach $0$ calibration error since the temperature is not optimal for data in $\cA$ or $\cB$.  
On the other hand, if we first optimize the selection model and obtain $\tilde g$, $\tilde g$ will reject points in $\cA$ instead of those in $\cB$ because points in $\cA$ incur higher calibration error, and thus data from both $\cA$ and $\cB$ will be selected (i.e., not rejected). In that case, temperature scaling not will be able to push calibration error to zero because, similar to the case in the earlier R-ECE analysis, the calibration error in $\cA$ and $\cB$ cannot reach $0$ simultaneously using a single temperature scaling model.  On the other hand, the optimal jointly-learned solution yields a set of predictions with zero expected calibration error.

\section{Conclusion}

We have shown both empirically and theoretically that combining selection and recalibration is a potent strategy for producing a set of well-calibrated predictions.  
Eight pairs of distribution and $\beta$ were tested when i.i.d. validation data is available; selective recalibration with our proposed S-TLBCE loss function outperforms every single recalibrator tested in 7 cases, and always reduces S-ECE with respect to the calibrator employed by the selective recalibration model itself.  Taken together, these results show that while many popular recalibration functions are quite effective at reducing 
calibration error, they can often be better fit to the data when given the opportunity to ignore a small portion of difficult examples.  
Thus, in domains where calibrated confidence is critical to decision making, selective recalibration is a practical and lightweight strategy for improving outcomes downstream of deep learning model predictions.

\subsubsection*{Broader Impact Statement}

While the goal of our method is to foster better outcomes in settings of societal importance like medical diagnosis, as mentioned in Section \ref{sec:rel_work}, selective classification may create disparities among protected groups.  
Future work on selective recalibration could focus on analyzing and mitigating any unequal effects of the algorithm.

\subsection*{Acknowledgments}

We thank Mike Mozer and Jasper Snoek for their very helpful feedback on this work. This research obtained support by the funds provided by the National Science Foundation and by DoD OUSD (R\&E) under Cooperative Agreement PHY-2229929 (ARNI: The NSF AI Institute for Artificial and Natural Intelligence). JCS gratefully acknowledges financial support from the Schmidt DataX Fund at Princeton University made possible through a major gift from the Schmidt Futures Foundation. TP gratefully acknowledges support by NSF AF:Medium 2212136 and by the Simons Collaboration on the Theory of Algorithm Fairness grant.
~

\bibliography{ref2}
\bibliographystyle{tmlr}

\newpage
\appendix
\section*{Appendix}

\section{Additional Experiment Details}\label{app:exp_details}

In training we follow \citet{fisch2022selective} and drop the denominator in $L_{sel}$, as the coverage loss suffices to keep $\hat{g}$ from collapsing to $0$.  Recalibration model code is taken from the accompanying code releases from \citet{guo2017oncalibration}\footnote{\url{https://github.com/gpleiss/temperature_scaling}} (Temperature Scaling) and \citet{kumar2019verified}\footnote{\url{https://github.com/p-lambda/verified_calibration}} (Platt Scaling, Histogram Binning, Platt Binning).

\subsection{Calibration Measures}\label{app:cal_measures}

We calculate $\text{ECE}_q$ for $q \in \{1,2\}$ using the python library released by \citet{kumar2019verified} \footnote{\url{https://github.com/p-lambda/verified_calibration}}.  $\text{ECE}_q$ is calculated as:
\begin{equation}
    \text{ECE}_q=\left( \frac{1}{|B|} \sum^{|B|}_{j=1}\left(\frac{\sum_{i \in B_j}\bm{1}\{y_i=\hat{y}_i\}}{|B_j|}-\frac{\sum_{i \in B_j}\hat{f}(x_i)}{|B_j|}\right)^q \right)^\frac{1}{q}
\end{equation}
where $B=B_1,...,B_m$ are a set of $m$ equal-mass prediction bins, and predictions are sorted and binned based on their maximum confidence $\hat{f}(x)$.  We set $m=15$.

\subsection{Baselines}\label{app:baselines}

Next we describe how baseline methods are implemented.  Our descriptions are based on creating an ordering of the test set such that at a given coverage level $\beta$, a $1-\beta$ proportion of examples from the end of the ordering are rejected. 

\subsubsection{Confidence-Based Rejection}
Confidence based rejection is performed by ordering instances in a decreasing order based on $\hat{f}(x)$, the maximum confidence the model has in any class for that example.

\subsubsection{Out of Distribution Scores}

The sklearn python library \citep{scikit-learn} is used to produce the One-Class SVM and Isolation Forest models.  Anomaly scores are oriented such that more typical datapoints are given higher scores; instances are ranked in a decreasing order.

\subsection{In-distribution Experiments}

Our selector $g$ is a shallow fully-connected network with 2 hidden layers of dimension 128 and ReLU activations.

\subsubsection{Camelyon17}\label{app:camelyon17}

Camelyon17 \citep{bandi2017camelyon} is a task where the input $x$ is a 96x96 patch of a whole-slide image of a lymph node section from a patient with potentially metastatic breast cancer, the label $y$ is whether the patch contains a tumor, and the domain $d$ specifies which of 5 hospitals the patch was from.  We pre-train a DenseNet-121 model on the Camelyon17 train set using the code from \citet{koh2021wilds}\footnote{\url{https://github.com/p-lambda/wilds}}.  The validation set has 34,904 examples and accuracy of ~91\%, while the test set has 84,054 examples, and accuracy of ~83\%. Our selector $g$ is trained with a learning rate of 0.0005, the coverage loss weight $\lambda$ is set to 32 (following \citep{giefman2019selective}), and the model is trained with 1000 samples for 1000 epochs with a batch size of 100.

\subsubsection{ImageNet}\label{app:imagenet}

ImageNet is a large scale image classification dataset.  We extract the features, scores, and labels from the 50,000 ImageNet validation samples using a pre-trained ResNet34 model from the torchvision library.  Our selector $g$ is trained with a learning rate of 0.00001, the coverage loss weight $\lambda$ is set to 32 (following \citep{giefman2019selective}), and the model is trained with 2000 samples for 1000 epochs with a batch size of 200.

\subsection{Out-of-distribution Experiments}

Our selector $g$ is a shallow fully-connected network (1 hidden layer with dimension 64 and ReLu activation) trained with a learning rate of 0.0001, the coverage loss weight $\lambda$ is set to 8, and the model is trained for 50 epochs (to avoid overfitting since this is an OOD setting) with a batch size of 256.  

\subsubsection{RxRx1}\label{app:rxrx1}

RxRx1 \citep{taylor2019rxrx1} is a task where the input $x$ is a 3-channel image of cells obtained by fluorescent microscopy, the label $y$ indicates which of the 1,139 genetic treatments (including no treatment) the cells received, and the domain $d$ specifies the batch in which the imaging experiment was run.  The validation set has 9,854 examples and accuracy of ~18\%, while the test set has 34,432 examples, and accuracy of ~27\%.  1000 samples are drawn for model training.  Gaussian noise with mean 0 and standard deviation 1 is added to training examples in order to promote robustness.

\subsubsection{CIFAR-100}\label{app:cifar-100}

CIFAR-100 is a well-known image classification dataset, and we perform zero-shot image classification with CLIP.  We draw 2000 samples for model training, and test on 50,000 examples drawn from the 750,000 examples in CIFAR-100-c.  Data augmentation in training is performed using AugMix \citep{hendrycks2019augmix} with a severity level of 3 and a mixture width of 3.

\section{Additional Experiment Results}\label{app:exp_results}

\subsection{Brier Score Results}\label{app:id_results}

While our focus in this work is Expected Calibration Error, for completeness we also report results with respect to Brier Score.  Figure \ref{fig:id_results_app} shows Brier score results for the experiments in Section~\ref{sec:camelyon17_exp}.  Selective recalibration reduces Brier score in both experiments and outperforms recalibration.  The OOD selection baselines perform well, although they show increasing error as more data is rejected, illustrating their poor fit for the task.  Further, Brier score results for the experiments in Section~\ref{sec:ood_exp} are included in Table \ref{fig:ood_results_app}.  Selective recalibration reduces error, and confidence-based rejection increases error, which is surprising since Brier score favors predictions with confidence near 1.

\begin{figure}[!ht]
\centering
    \includegraphics[width=\textwidth]{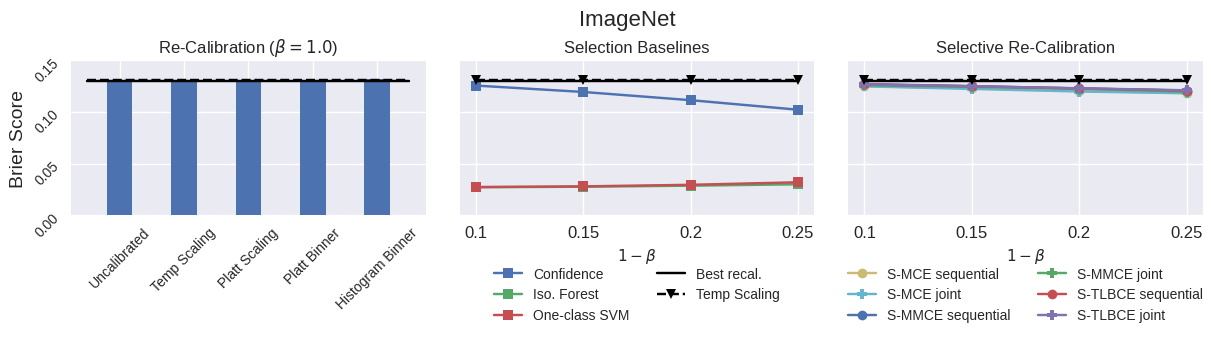}
    \includegraphics[width=\textwidth]{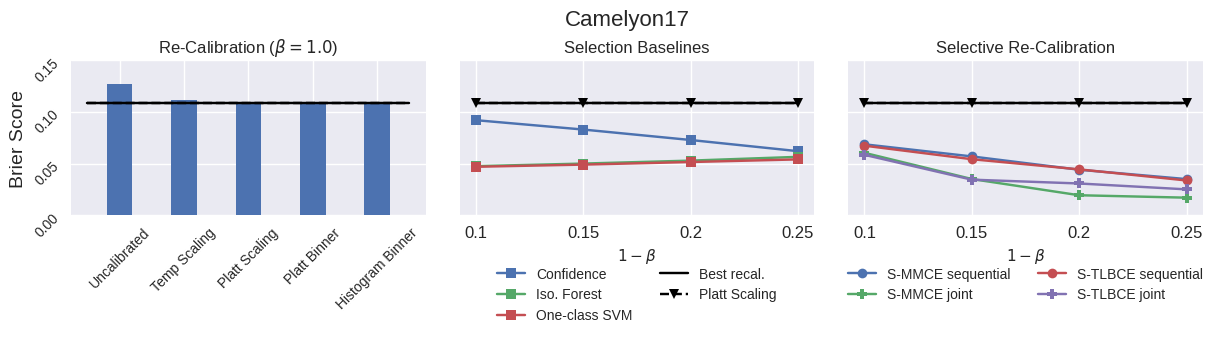}
    \caption{Selective calibration error on ImageNet and Camelyon17 for coverage level $\beta \in \{0.75, 0.8, 0.85, 0.9\}$. \textbf{Left}: Various re-calibration methods are trained using labeled validation data. \textbf{Middle}: Selection baselines including confidence-based rejection and various OOD measures. \textbf{Right}: Selective re-calibration with different loss functions.}
    \label{fig:id_results_app}
\end{figure}

\begin{table}[!ht]
    \centering
    \caption{
    RxRx1 and CIFAR-100-C AUC in the range $\beta=[0.5, 1.0]$.
    }
    \label{fig:ood_results_app}
    \begin{tabular}{llcccc}
    \toprule
           Selection &           Opt. of $h,g$ &  RxRx1 &  CIFAR-100-C \\
    \midrule
            Confidence &           - &        0.169 &            0.180 \\
            One-class SVM &           - &        0.077 &            \textbf{0.051} \\
            Iso. Forest &           - &        \textbf{0.061} &            \textbf{0.051} \\
            \midrule
            S-MCE &       sequential &        0.138 &            0.166 \\
            S-MCE &       joint &        0.138 &            0.166 \\
           S-MMCE &       sequential &        0.126 &            0.165 \\
           S-MMCE &       joint &        0.126 &            0.164 \\
          S-TLBCE &       sequential &        0.126 &            0.166 \\
          S-TLBCE &       joint &        0.126 &            0.165 \\
        \midrule
        Recalibration & Temp. Scale &        0.140 &            0.164 \\
        ($\beta=1.0$) &        None &        0.252 &            0.170 \\
    \bottomrule
    \end{tabular}

\end{table}

\newpage

\section{Theory: Additional Details and Proofs}\label{app:theory}

\subsection{Details on Data Generation Model}\label{app:data}

\begin{definition}[Formal version of definition~\ref{model}] 
For $\theta^*\in\bR^p$, a $(\theta^*,\sigma,\alpha,r_1,r_2)$-perturbed truncated-Gaussian model is defined as the following distribution over $(x,y)\in \bR^p\times\{1,-1\}$: 
$$x\mid y\sim zJ_1+(1-z)J_2.$$
 Here, $J_1$ and $J_2$ are two truncated Guassian distributions, i.e.,
 \begin{align*}
 &J_1\sim \rho_1 \cN(y\cdot\theta^*,\sigma^2I)\bm{1}\{x\in\bB(\theta^*,r_1)\cup\bB(-\theta^*,r_1) \},\\
  &J_2\sim \rho_2 \cN(-y\cdot\alpha\theta^*,\sigma^2I)\bm{1}\{x\in\bB(\alpha\cdot\theta^*,r_2)\cup\bB(-(\alpha\cdot\theta^*,r_2)\}
 \end{align*}
 where $\rho_1,\rho_2$ are normalization coefficients to make $J_1$ and $J_2$  properly defined; $y$ follows the Bernoulli distribution $\bP(y=1)=\bP(y=-1)=1/2$; and $z$ follows a Bernoulli distribution $\bP(z=1)= \beta$.

For simplicity, throughout this paper, we set $\rho_1=\rho_2$ and this is always achievable by setting $r_1/r_2$ appropriately. We also set $\alpha\in(0,1/2)$.
\end{definition}

\subsection{Details on $\hat\theta$} \label{app:theta}

Recall that we consider the $\hat\theta$ that is the output of a training algorithm $\mathscr{A}(S^{tr})$ that takes the i.i.d. training data set $S^{tr}=\{(x^{tr}_i,y^{tr}_i)\}_{i=1}^m$ as input. We imposed the following assumption on $\hat\theta$.
\begin{assumption}\label{as:theta}
For any given $\delta\in(0,1)$, there exists $\theta_0\in\bR^p$ with $\|\theta_0\|=\Theta(1)$, that with probability at least $1-\delta$
$$\|\hat\theta-\theta_0\|<\phi(\delta,m),$$
and $\phi(\delta, m)$ goes to $0$ as $m$ goes to infinity. Also, there exist a threshold $M\in\bN^+$ such that if $m>M$, $\phi(\delta,m)$ is a decreasing function of $\delta$ and $n$. Moreover, 
$$\min\left\{\frac{\theta_0^\top\theta^*}{\|\theta_0\|^2},\theta_0^\top\theta^*, \|\theta_0\|\right\}>0.$$
\end{assumption}

We will prove the following lemma as a cornerstone for our future proofs.
\begin{lemma}\label{lm:theta}
Under Assumption~\ref{as:theta}, for any $\delta\in(0,1)$, there exists a threshold $M\in\bN^+$, and constants $0<I_1<I_2$, $0<I_3<I_4<\alpha I_3$, $0<I_5<I_6$, such that if $m>M$, with probability at least $1-\delta$ over the randomness of $S^{tr}$,
$$\frac{\hat{\theta}^\top\theta^*}{\|\hat\theta\|^2}\in[I_1,I_2],\quad \hat{\theta}^\top\theta^*\in[I_3,I_4],\quad \|\hat\theta\|\in[I_5,I_6].$$
\end{lemma}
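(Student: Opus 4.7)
My plan is to combine Assumption~\ref{as:theta}, which gives norm control on $\hat\theta-\theta_0$ with high probability, with continuity of the three scalar functionals $\theta\mapsto \theta^\top\theta^*/\|\theta\|^2$, $\theta\mapsto \theta^\top\theta^*$, and $\theta\mapsto \|\theta\|$ in a neighborhood of $\theta_0$. The strict positivity $\|\theta_0\|>0$ given by Assumption~\ref{as:theta} keeps $\theta_0$ away from the singular locus of the first map, so all three functionals are smooth there; this converts the norm bound on $\hat\theta-\theta_0$ directly into two-sided bounds for each quantity of interest.

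Concretely, fix $\delta\in(0,1)$ and define the three strictly positive constants
\begin{align*}
C_1:=\theta_0^\top\theta^*/\|\theta_0\|^2,\qquad C_2:=\theta_0^\top\theta^*,\qquad C_3:=\|\theta_0\|,
\end{align*}
whose positivity is exactly the last display of Assumption~\ref{as:theta}. By continuity at $\theta_0$, for every $\eta>0$ there exists $\epsilon(\eta)>0$ such that $\|\hat\theta-\theta_0\|\le \epsilon(\eta)$ forces each of the three quantities into the interval $[C_j-\eta,C_j+\eta]$ simultaneously. I would choose $\eta$ small enough that all three intervals sit strictly inside $(0,\infty)$ and, in addition, the middle interval satisfies the required spacing relation involving $\alpha$ stated in the lemma; then take $I_1,\ldots,I_6$ to be the corresponding endpoints.

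Finally, I invoke the monotonicity clause of Assumption~\ref{as:theta} to pick a threshold $M\in\bN^+$ with $\phi(\delta,M)<\epsilon(\eta)$. For every $m>M$ the high-probability norm bound $\|\hat\theta-\theta_0\|<\phi(\delta,m)\le \phi(\delta,M)<\epsilon(\eta)$ holds on the same event of probability at least $1-\delta$, and on that event all three interval containments of the lemma follow.

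The only delicate point is tuning $\eta$ so that $[I_3,I_4]$ meets the spacing condition with $\alpha$ without collapsing the interval: shrinking $\eta$ shrinks $\epsilon(\eta)$, which in turn forces a larger $M$, so one must verify that this finite $M$ exists. This is precisely what the clause $\phi(\delta,m)\downarrow 0$ past the assumption's threshold buys us, since any strictly positive target for $\epsilon(\eta)$ is then eventually met. All remaining steps are routine first-order perturbation bounds around $\theta_0$ (Taylor-expanding the three functionals at $\theta_0$ yields explicit Lipschitz constants in terms of $\theta^*$ and $C_3$), so I would not grind through those estimates here.
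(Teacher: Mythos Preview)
Your proposal is correct and follows essentially the same approach as the paper's proof: both argue that $\hat\theta\to\theta_0$ via Assumption~\ref{as:theta}, invoke continuity of the three functionals $\theta\mapsto\theta^\top\theta^*/\|\theta\|^2$, $\theta\mapsto\theta^\top\theta^*$, $\theta\mapsto\|\theta\|$ at $\theta_0$ (where $\|\theta_0\|>0$), and then choose a small enough tolerance ($\eta$ in yours, $\varepsilon$ in the paper) plus a large enough $M$ so that the three quantities land in the stated intervals with the required positivity and $\alpha$-spacing. Your write-up is in fact slightly more explicit than the paper's about why the first functional is well-behaved (avoiding the singular locus) and about how $\phi(\delta,m)\downarrow 0$ guarantees the existence of $M$.
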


\begin{proof}
Under Assumption~\ref{as:theta}, we know $m\rightarrow \infty$ leads to $\hat\theta\rightarrow \theta_0$. In addition, for any $\delta\in (0,1)$ there exists a threshold $M\in\bN^+$ such that if $m>M$, $\phi(\delta,m)$ is a decreasing function of $\delta$ and $m$, which leads to 
$$\frac{\hat{\theta}^\top\theta^*}{\|\hat\theta\|^2}\in[\frac{\theta_0^\top\theta^*}{\|\theta_0\|^2}-\varepsilon,\frac{\theta_0^\top\theta^*}{\|\theta_0\|^2}+\varepsilon],\quad \hat{\theta}^\top\theta^*\in[\theta_0^\top\theta^*-\varepsilon,\theta_0^\top\theta^*+\varepsilon],\quad \|\hat\theta\|\in[\|\theta_0\|-\varepsilon,\|\theta_0\|+\varepsilon]$$
for some small $\varepsilon >0$ that makes the left end of the above intervals larger than $0$ and $\theta_0^\top\theta^*+\varepsilon <\alpha(\theta_0^\top\theta^*-\varepsilon) $ hold for all $r_1,r_2,\sigma, m$ as long as $m>M$. Then, we set $I_i$'s accordingly to each value above. 
\end{proof}

\subsubsection{Example Training $\hat\theta$} \label{subsec:example}
	
In this section, we provide one example to justify Assumption~\ref{as:theta}, i.e., $\hat\theta=\sum_{i=1}^m x^{tr}_iy^{tr}_i/m$, where the training set is drawn from an unperturbed Gaussian mixture, i.e., $x^{tr}|y^{tr}\sim\cN(y^{tr}\cdot\theta^*,\sigma^2 I)$ and $y^{tr}$ follows a Bernoulli distribution $\bP(y^{tr}=1)=1/2$. Directly following the analysis of \citet{zhang2022and}, we have

	\begin{align*}
		\hat\theta^\top\theta^*= O_\bP(\frac{1}{\sqrt{m}})\|\theta^*\|+ \|\theta^*\|^2.
	\end{align*}

	For $\|\hat\theta\|^2$, notice that 
	\begin{align*}
		\hat\theta= \theta^*+\epsilon_m
	\end{align*}
	where $\epsilon_m\sim\cN(0,\frac{\sigma^2I}{m})$. Then, we have 
	$$\|\hat\theta\|^2 =   \|\theta^*\|^2+2\epsilon_m^\top\theta^*+\|\epsilon_m\|^2=\|\theta^*\|^2+\frac{p}{m}+O_{\bP}(\frac{\sqrt{p}}{m})+ O_\bP(\frac{1}{\sqrt{m}})\|\theta^*\|.$$
Given $p/m=O(1)$, combined with the form of classic concentration inequalities, one can verify this example satisfies Assumption~\ref{as:theta}.

\subsection{Background: ECE Calculation}

Recall we denote $\hat f(x)=\max\{\hat{p}_{-1}(x),\hat{p}_{1}(x)\}$ and denote the predicition result $\hat y = \hat C(x)$. The definition of ECE is:
$$
\ECE=\E_{\hat f(x)} | \bP[y=\hat y\mid \hat f(x)=p]-p|.
$$

Notice that there are two cases. 
\begin{itemize}
\item Case 1: $\hat f(x) = \hat{p}_{1}(x)$, by reparameterization, we have 
\begin{align*}
|\bP[y=\hat y\mid \hat f(x)=p]-p| &= \left|\bP[y=1\mid \hat f(x)=\frac{e^{2v}}{1+e^{2v}}]-\frac{e^{2v}}{1+e^{2v}}\right|\\
&= \left|\bP[y=1\mid \hat\theta^\top x =v]-\frac{e^{2v}}{1+e^{2v}}\right|.
\end{align*}
\item Case 2: $\hat f(x) = \hat{p}_{-1}(x)$, by reparameterization, we have 
\begin{align*}
|\bP[y=\hat y\mid \hat f(x)=p]-p| &= \left|\bP[y=-1\mid \hat f(x)=\frac{1}{1+e^{2v}}]-\frac{1}{1+e^{2v}}\right|\\
&=\left|\bP[y=-1\mid \hat\theta^\top x =v]-\frac{1}{1+e^{2v}}\right|\\
&= \left|1-\bP[y=-1\mid \hat\theta^\top x =v]-(1-\frac{e^{2v}}{1+e^{2v}})\right|\\
&=\left|\bP[y=1\mid \hat\theta^\top x =v]-\frac{e^{2v}}{1+e^{2v}}\right|.
\end{align*}
\end{itemize}

To summarize, 

$$
\ECE=\E_{\hat f(x)} | \bP[y=\hat y\mid \hat f(x)=p]-p|=\bE_{\hat\theta^\top x}\left|\bP[y=1\mid \hat\theta^\top x =v]-\frac{1}{1+e^{-2v}}\right|.
$$

\paragraph{Temperature scaling.}

\begin{equation}
p^T_{-1}(x)=\frac{1}{e^{2\cdot\hat{\theta}^\top x/T}+1},~~~p^T_{1}(x)=\frac{e^{2\cdot\hat{\theta}^\top x/T}}{e^{2\cdot\hat{\theta}^\top x/T}+1}.
\end{equation}

Thus, 
$$
\RECE=\bE_{\hat\theta^\top x}\left|\bP[y=1\mid \hat\theta^\top x =vT]-\frac{1}{1+e^{-2v}}\right|.
$$

\paragraph{Platt scaling.}
\begin{equation}
p^{w,b}_{-1}(x)=\frac{1}{e^{2w\cdot\hat{\theta}^\top x+2b}+1},~~~p^{w,b}_{1}(x)=\frac{1}{e^{-2w\cdot\hat{\theta}^\top x-2b}+1}.
\end{equation}

$$
\ECE_{w,b}=\bE_{\hat\theta^\top x}\left|\bP[y=1\mid w\cdot\hat\theta^\top x +b=v]-\frac{1}{1+e^{-2v}}\right|.
$$

\paragraph{ECE calculation.} The distribution of $\hat\theta^\top x$ has the following properties.

\begin{itemize}
\item $\hat\theta^\top x|y=1\sim  z\rho_1\cN(\hat\theta^\top\theta^*,\sigma^2\|\hat\theta\|^2)\bm{1}\{\hat\theta^\top x\in\bB(\hat\theta^\top\theta^*,r_1\|\hat\theta\|)\cup\bB(-\hat\theta^\top\theta^*,r_1\|\hat\theta\|) \}\\
+(1-z)\rho_2\cN(-\alpha\cdot \hat\theta^\top\theta^*,\sigma^2\|\hat\theta\|^2))\bm{1}\{x\in\bB(\alpha\hat\theta^\top\theta^*,r_2\|\hat\theta\|)\cup\bB(-(\alpha\hat\theta^\top\theta^*,r_2\|\hat\theta\|)\}$;
\item $\hat\theta^\top x|y=-1\sim  z\rho_1\cN(-\hat\theta^\top\theta^*,\sigma^2\|\hat\theta\|^2)\bm{1}\{\hat\theta^\top x\in\bB(\hat\theta^\top\theta^*,r_1\|\hat\theta\|)\cup\bB(-\hat\theta^\top\theta^*,r_1\|\hat\theta\|) \}\\
+(1-z)\rho_2\cN(\alpha\cdot \hat\theta^\top\theta^*,\sigma^2\|\hat\theta\|^2))\bm{1}\{\hat\theta^\top x\in\bB(\alpha\hat\theta^\top\theta^*,r_2\|\hat\theta\|)\cup\bB(-\alpha\hat\theta^\top\theta^*,r_2\|\hat\theta\|) \}$.
\end{itemize}Now, we are ready to calculate ECE. Specifically, given $\hat\theta$,
For notation simplicity, we denote $\cA=\bB(\hat\theta^\top\theta^*,r_1\|\hat\theta\|)\cup\bB(-\hat\theta^\top\theta^*,r_1\|\hat\theta\|)$ and $\cB=\bB(\alpha\cdot\hat\theta^\top\theta^*,r_2\|\hat\theta\|)\cup\bB(-(\alpha\cdot\hat\theta^\top\theta^*,r_2\|\hat\theta\|)$.  Meanwhile, for simplicity, we choose $r_1,r_2$ such that $\rho_1=\rho_2=\rho$. This is always manageable and there exists infinitely many choices, we only require $S_1\cap S_2=\emptyset$ for any $S_1\neq S_2$, $S_1,S_2\in\{\bB(\hat\theta^\top\theta^*,r_1\|\hat\theta\|),\bB(-\hat\theta^\top\theta^*,r_1\|\hat\theta\|),\bB(\alpha\hat\theta^\top\theta^*,r_2\|\hat\theta\|),\bB(-\alpha\hat\theta^\top\theta^*,r_2\|\hat\theta\|)\}$. In able to achieve $\rho_1=\rho_2=\rho$, it only depends on $r_1/r_2$. Apparently, there exists a threshold $\phi>0$ such that if $r_1$ $r_2$ are both smaller than $\phi$ (one can choose $r_1,r_2$ as functions of $\sigma$ with appropriate choosen $\sigma$), then $\cA\cap\cB=\emptyset$ can be achieved.

\begin{align*}
&\bP[y=1\mid \hat\theta^\top x =v]= \frac{\Prob(\hat\theta^\top x=v\mid y=1)}{\Prob(\hat\theta^\top x=v\mid y=1)+\Prob(\hat\theta^\top x=v\mid y=-1)}\\
&=\frac{1}{1+\exp\left(\frac{-2\hat\theta^\top\theta^*}{\sigma^2\|\hat\theta\|^2}\cdot v\right)}\bm{1}\{v\in \cA\}+ \frac{1}{1+\exp\left(\frac{2\alpha\hat\theta^\top\theta^*}{\sigma^2\|\hat\theta\|^2}\cdot v\right)}\bm{1}\{v\in \cB\}
\end{align*}


\subsection{Proof of Theorem~\ref{thm:joint}}

\subsubsection{Temperature Scaling Only} \label{subsub:temp}

A simple reparameterization leads to:
\begin{align*}
\RECE=\bE_{v=\hat\theta^\top x}\left|\frac{\bm{1}\{v\in \cA\}}{1+\exp\left(\frac{-2\hat\theta^\top\theta^*}{\sigma^2\|\hat\theta\|^2}\cdot v\right)}+ \frac{\bm{1}\{v\in \cB\}}{1+\exp\left(\frac{2\alpha\hat\theta^\top\theta^*}{\sigma^2\|\hat\theta\|^2}\cdot v\right)}-\frac{1}{e^{-2v/T}+1}\right|
\end{align*}

The lower bound contains two parts. We choose the threshold $\phi$ mentioned previously small enough such that $I_3>\max\{r_1,r_2\}$. This can be achieved because $I_3$ is independent of $r_1,r_2$.

\paragraph{Part I. } When $v\in \bB(\hat\theta^\top\theta^*, r_1\|\hat\theta\|)\subset \cA $, and we know that $\cA\cap\cB=\emptyset$. Let us choose a threshold $\min\{I_1,I_3\}/\sigma^2>\tau>0$. Then for \textbf{\textit{any}} $T>0$, it must fall into one of the following three cases.

\begin{itemize}
\item[Case 1:]  $T^{-1}$ and $\hat\theta^\top\theta^*/(\sigma^2\|\hat\theta\|^2)$ are far: $T^{-1}-\hat\theta^\top\theta^*/(\sigma^2\|\hat\theta\|^2)>\tau$, recall $v=\hat\theta^\top x$, then
\begin{align*}
&\bE_{x\in  \bB(\hat\theta^\top\theta^*, r_1\|\hat\theta\|)}\left|\frac{1}{1+\exp\left(\frac{-2\hat\theta^\top\theta^*}{\sigma^2\|\hat\theta\|^2}\cdot v\right)}-\frac{1}{1+e^{-2v/T}}\right|\\
	&\ge\left[ \frac{1}{1+\exp\left(-2T^{-1}(\hat\theta^\top\theta^*-r_1)\right)}- \frac{1}{1+\exp\left(\frac{-2\hat\theta^\top\theta^*}{\sigma^2\|\hat\theta\|^2}(\hat\theta^\top\theta^*+r_1)\right)}\right]\bP(x\in  \bB(\theta^*, r_1)))\\
	&\ge \frac{\beta}{2}\left[\frac{1}{1+\exp\left({-2(\hat\theta^\top\theta^*/(\sigma^2\|\hat\theta\|^2)+\tau)(\hat\theta^\top\theta^*-r_1)}\right)}- \frac{1}{1+\exp\left(\frac{-2\hat\theta^\top\theta^*}{\sigma^2\|\hat\theta\|^2}(\hat\theta^\top\theta^*+r_1)\right)}\right]\\
	&\ge \frac{\beta}{2} \left[\frac{1}{1+\exp\left({-2(\hat\theta^\top\theta^*/(\sigma^2\|\hat\theta\|^2)+\tau)(\hat\theta^\top\theta^*-r_1)}\right)}- \frac{1}{1+\exp\left(\frac{-2\hat\theta^\top\theta^*}{\sigma^2\|\hat\theta\|^2}(\hat\theta^\top\theta^*+r_1)\right)}\right]\\
	&\ge\frac{\beta}{2}\min_{c\in[I_1,I_2],d\in[I_3,I_4]} \left[\frac{1}{1+\exp\left({-2(c/\sigma^2+\tau)(d-r_1)}\right)}- \frac{1}{1+\exp\left(-2c/\sigma^2(d+r_1)\right)}\right]:=\beta_1
\end{align*}
\item[Case 2:] $T^{-1}$ and $\hat\theta^\top\theta^*/(\sigma^2\|\hat\theta\|^2)$ are far: $\hat\theta^\top\theta^*/(\sigma^2\|\hat\theta\|^2)-T^{-1}>\tau$,
\begin{align*}
	&\bE_{x\in  \bB(\hat\theta^\top\theta^*, r_1\|\hat\theta\|)}\left|\frac{1}{1+\exp\left(\frac{-2\hat\theta^\top\theta^*}{\sigma^2\|\hat\theta\|^2}\cdot v\right)}-\frac{1}{1+e^{-2v/T}}\right|\\
	&\ge\left[ \frac{1}{1+\exp\left(\frac{-2\hat\theta^\top\theta^*}{\sigma^2\|\hat\theta\|^2}(\hat\theta^\top\theta^*-r_1)\right)}-\frac{1}{1+\exp\left(-2T^{-1}(\hat\theta^\top\theta^*+r_1)\right)}\right]\cdot\frac{\beta}{2}\\
	&\ge\left[ \frac{1}{1+\exp\left(\frac{-2\hat\theta^\top\theta^*}{\sigma^2\|\hat\theta\|^2}(\hat\theta^\top\theta^*-r_1)\right)}- \frac{1}{1+\exp\left({-2(\hat\theta^\top\theta^*/(\sigma^2\|\hat\theta\|^2)-\tau)(\hat\theta^\top\theta^*+r_1)}\right)}\right]\cdot \frac{\beta}{2}\\
	&\ge\frac{\beta}{2}\min_{c\in[I_1,I_2],d\in[I_3,I_4]} \left[\frac{1}{1+\exp\left({-2c/\sigma^2(d-r_1)}\right)}- \frac{1}{1+\exp\left(-2(c/\sigma^2-\tau)(d+r_1)\right)}\right]:=\beta_2
\end{align*}

\item[Case 3:] When $T^{-1}$ and $\hat\theta^\top\theta^*/(\sigma^2\|\hat\theta\|^2)$ are close: $|T^{-1}-\hat\theta^\top\theta^*/(\sigma^2\|\hat\theta\|^2)|\le \tau$, then when $v\in \bB(-\alpha\hat\theta^\top\theta^*, r_2\|\hat\theta\|)\subset \cB $. For small enough $\tau$ satisfying $\tau\le  0.2(1-\alpha)I_1/\sigma^2$
\begin{align*}
	&\bE_{v=\hat\theta^\top x\in  \bB(-\alpha\hat\theta^\top\theta^*, r_2\|\hat\theta\|)}\left|\frac{1}{1+\exp\left(\frac{2\alpha\hat\theta^\top\theta^*}{\sigma^2\|\hat\theta\|^2}\cdot v\right)}-\frac{1}{1+e^{-2v/T}}\right|\\
	&\ge \min_{a\in [ \frac{-\alpha\hat\theta^\top\theta^*}{\sigma^2\|\hat\theta\|^2},\frac{\hat\theta^\top\theta^*}{\sigma^2\|\hat\theta\|^2}+\tau]}\min_{v\in  \bB(-\alpha\hat\theta^\top\theta^*, r_2\|\hat\theta\|)} \frac{2v\exp(2va)}{(1+\exp(2av))^2} \left(\frac{2(1-\alpha)\hat\theta^\top\theta^*}{\sigma^2\|\hat\theta\|^2}-\tau\right)\frac{1-\beta}{2}\\
		&\ge \min_{a\in [-\frac{\alpha I_2}{\sigma^2},\frac{I_2}{\sigma^2}+\tau]}\min_{v\in [ \alpha I_3-r_2I_6,\alpha I_4+r_2I_6]} \frac{2v\exp(2va)}{(1+\exp(2av))^2} \left(1.8(1-\alpha)\frac{I_1}{\sigma^2}\right)\frac{1-\beta}{2}:=\beta_3
\end{align*}
\end{itemize}
\paragraph{Part III.} Combining together, we have 
\begin{align*}
	\RECE\ge \min\{\beta_1,\beta_2,\beta_3\}.
\end{align*}
Finally, we take $r_1\le \min\{0.1,\tau\sigma^2/I_1\}I_3$, which ensures $\beta_i>0$ for all $i=1,2,3$.

\subsubsection{Selective Calibration Only} 
We hope $\bE[g(x)=1]\ge \beta$. Let us first define  $\cG=\{\hat\theta^\top x:g(x)=1\}$. Then, for \textbf{\textit{any}} $g$,  we have

\begin{align*}
&\bP[y=1\mid \hat\theta^\top x =v,v\in\cG]\\
&=\frac{\Prob(\hat\theta^\top x=v ,v\in\cG\mid y=1)}{\Prob(\hat\theta^\top x=v,v\in\cG\mid y=1)+\Prob(\hat\theta^\top x=v,v\in\cG\mid y=-1)}\\
&=\left(\frac{1}{1+\exp\left(\frac{-2\hat\theta^\top\theta^*}{\sigma^2\|\hat\theta\|^2}\cdot v\right)}\bm{1}\{v\in \cA\}+ \frac{1}{1+\exp\left(\frac{2\alpha\hat\theta^\top\theta^*}{\sigma^2\|\hat\theta\|^2}\cdot v\right)}\bm{1}\{v\in \cB\}\right)\bm{1}\{v\in \cG\}\\
&=\frac{1}{1+\exp\left(\frac{-2\hat\theta^\top\theta^*}{\sigma^2\|\hat\theta\|^2}\cdot v\right)}\bm{1}\{v\in \cA\cap\cG\}+ \frac{1}{1+\exp\left(\frac{2\alpha\hat\theta^\top\theta^*}{\sigma^2\|\hat\theta\|^2}\cdot v\right)}\bm{1}\{v\in \cB\cap\cG\}
\end{align*}
Then, by choosing small enough $\sigma$, such that $I_1/\sigma^2>1$, the corresponding ECE is: 
\begin{align*}
	\ECE_S&=\bE_{v=\hat\theta^\top x\mid\hat\theta^\top x\in\cG}\Big|\frac{1}{1+\exp\left(\frac{-2\hat\theta^\top\theta^*}{\sigma^2\|\hat\theta\|^2}\cdot v\right)}\bm{1}\{v\in \cA\cap\cG\}\\
 &~~~~~~~+ \frac{1}{1+\exp\left(\frac{2\alpha\hat\theta^\top\theta^*}{\sigma^2\|\hat\theta\|^2}\cdot v\right)}\bm{1}\{v\in \cB\cap\cG\}-\frac{1}{e^{-2v}+1}\Big|\\
&\ge\bE_{v=\hat\theta^\top x\mid\hat\theta^\top x\in\cG}\left| \frac{1}{1+\exp\left(\frac{-2\hat\theta^\top\theta^*}{\sigma^2\|\hat\theta\|^2}\cdot v\right)}\bm{1}\{v\in \cA\cap\cG\}-\frac{1}{e^{-2v}+1}\right|\\
&~~~~+\bE_{v=\hat\theta^\top x\mid\hat\theta^\top x\in\cG}\left| \frac{1}{1+\exp\left(\frac{2\alpha\hat\theta^\top\theta^*}{\sigma^2\|\hat\theta\|^2}\cdot v\right)}\bm{1}\{v\in \cB\cap\cG\}-\frac{1}{e^{-2v}+1}\right|\\
&	\ge \lambda_1\bP(v\in\cA\cap\cG|v\in \cG)+ \lambda_2\bP(v\in\cB\cap\cG|v\in\cG).
\end{align*}
Since $\bP(v\in\cA\cap\cG|v\in \cG)+\bP(v\in\cB\cap\cG|v\in \cG)=1$, it is not hard to verify that

$$\SECE\ge \min\{\lambda_1,\lambda_2\}$$
where
$$\lambda_1=  \min_{a\in [1, \frac{I_2}{\sigma^2}]}\min_{v\in  \cA} \frac{2v\exp(2va)}{(1+\exp(2av))^2} \left |\frac{I_1}{\sigma^2}-1\right |$$

$$\lambda_2=  \min_{a\in [-\frac{\alpha I_2}{\sigma^2},1]}\min_{v\in  \cB} \frac{2v\exp(2va)}{(1+\exp(2av))^2} \left |\frac{\alpha I_1}{\sigma^2}-1\right |.$$

\subsubsection{Selective Re-calibration} We choose $\cG=\cB$ and set $T^{-1}=\frac{\hat\theta^\top\theta^*}{\sigma^2\|\hat\theta\|^2}$, then $\SRECE=0.$ Thus, there exists appropriate choice of $g$ and $T$ such that $$\SRECE(g,T)=0.$$

\subsection{Proof of Theorem~\ref{thm:seqvsjoint}}

Usually, $\beta$ is much larger than $1-\beta$, for example, $\beta=90\%$. In this section, we impose the following assumption.

\begin{assumption}\label{as:beta}
The selector $g$ will retain most of the probabilty mass in the sense that 
$$\beta>2(1-\beta).$$
\end{assumption}

Let us denote $\xi=\beta/2-(1-\beta)$ and $\xi$ is a positive constant. First, we have the following claim.

\begin{claim}\label{claim:1}
Under Assumption~\ref{as:beta}, if we further have
\begin{align*}
&\min_{v\in \bB(\hat\theta^\top\theta^*,r_1\|\hat\theta\|)}\left| \frac{1}{1+\exp\left(\frac{-2\hat\theta^\top\theta^*}{\sigma^2\|\hat\theta\|^2}\cdot v\right)}-\frac{1}{e^{-2v}+1}\right|\\
&> \max_{v\in \bB(-\alpha\hat\theta^\top\theta^*,r_2\|\hat\theta\|)}\left| \frac{1}{1+\exp\left(\frac{2\alpha\hat\theta^\top\theta^*}{\sigma^2\|\hat\theta\|^2}\cdot v\right)}-\frac{1}{e^{-2v}+1}\right|,
\end{align*}
then for $g_1=\argmin_{g:\bE[g(x)=1]\ge \beta} \SECE$, we have that $\bE_{x\in \bB(-\alpha\theta^*,r_2)} [g_1(x)=1]=\bP(x\in\bB(-\alpha\theta^*,r_2))$.
\end{claim}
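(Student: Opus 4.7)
The plan is an exchange (swap) argument: under the hypothesis, the pointwise calibration error is strictly larger everywhere on the inlier projection $\cA$ than everywhere on the outlier projection $\cB$, so an optimal S-ECE selector must spend its rejection budget on $\cA$ and retain all of $\cB$, which contains $\bB(-\alpha\theta^*,r_2)$.

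I would first reduce to selectors depending on $x$ only through $v=\hat\theta^\top x$, as is implicitly done in the S-ECE calculation of Section~6.3. Since the pointwise miscalibration $\psi(v):=|\bP[y=1\mid V=v]-(1+e^{-2v})^{-1}|$ is $\sigma(V)$-measurable, $\SECE$ takes the ratio form $\bE[\psi(V)g(X)]/\bE[g(X)]$, which equals $\bE[\psi(V)g^*(V)]/\bE[g^*(V)]$ for $g^*(v):=\bE[g(X)\mid V=v]$ by the tower property; replacing $g$ by $g^*$ therefore preserves both coverage and $\SECE$ and lets us assume $g=g(v)$. A quick symmetry check using $(1+e^t)^{-1}=1-(1+e^{-t})^{-1}$ shows that $\psi_\cA$ and $\psi_\cB$ are even in $v$, so the hypothesis upgrades from the two particular balls stated in the claim to $\psi_{\min}^\cA:=\min_{v\in\cA}\psi_\cA(v)>\max_{v\in\cB}\psi_\cB(v)=:\psi_{\max}^\cB$.

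Now let $g_1$ minimize $\SECE$ under $\bE[g_1(V)]\ge\beta$ and write $\mu_\cA=\bE[g_1(V)\mathbf{1}_\cA]$, $\mu_\cB=\bE[g_1(V)\mathbf{1}_\cB]$, with total masses $P_\cA=\beta$, $P_\cB=1-\beta$. Suppose, for contradiction, $\mu_\cB<P_\cB$. Coverage forces $\mu_\cA\ge\beta-\mu_\cB>2\beta-1>0$, using $\beta>1/2$ from Assumption~\ref{as:beta}. Then there exist $A\subset\cA$ with $g_1>0$ on $A$ and $B\subset\cB$ with $g_1<1$ on $B$; for small $\epsilon>0$ define $g_1'$ by decreasing $g_1$ on $A$ and increasing it on $B$ so that $\int_A(g_1-g_1')p_V=\int_B(g_1'-g_1)p_V=\epsilon$. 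Coverage is unchanged. Bounding $\psi_\cA\ge\psi_{\min}^\cA$ on $A$ and $\psi_\cB\le\psi_{\max}^\cB$ on $B$, the numerator of $\SECE$ changes by at most $-\psi_{\min}^\cA\epsilon+\psi_{\max}^\cB\epsilon=-(\psi_{\min}^\cA-\psi_{\max}^\cB)\epsilon<0$, contradicting the optimality of $g_1$. Hence $\mu_\cB=P_\cB$, i.e.\ $g_1=1$ almost surely on $\cB\supset\bB(-\alpha\theta^*,r_2)$, which is the claimed equality.

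The main obstacle is the reduction step: $\SECE$ is defined via the conditional $\bP[y=1\mid V,g(X)=1]$ rather than the unconditional posterior $\bP[y=1\mid V]$, and justifying the switch cleanly requires some care. The cleanest route is to observe that the entire swap argument can be run in $v$-space using $g^*$, so that the awkward conditional never appears; once that is settled, the swap construction and bound above are entirely elementary.
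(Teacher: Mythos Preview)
Your approach is essentially the paper's: a swap/exchange argument showing that any optimal selector rejecting mass in the outlier ball can be strictly improved by shifting that rejection onto the inlier region. Two minor differences worth noting: (i) the paper swaps directly between the specific $x$-space balls $\bB(\theta^*,r_1)$ and $\bB(-\alpha\theta^*,r_2)$ named in the hypothesis rather than first extending via evenness of $\psi$ to all of $\cA$ versus $\cB$, which is exactly why it genuinely needs $\beta>2(1-\beta)$ (the single ball $\bB(\theta^*,r_1)$ carries only mass $\beta/2$, so one must rule out all of the rejection budget landing there), whereas your $\cA$-versus-$\cB$ version only uses $\beta>1/2$; (ii) the paper, like you, tacitly works with selectors of the form $g=g(v)$ --- it defines $\cG=\{\hat\theta^\top x:g(x)=1\}$ and computes $\bP[y=1\mid V=v,v\in\cG]$ without ever revisiting the reduction --- so your self-flagged obstacle sits at the same level of rigor as the original.

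One caveat on your proposed resolution: the $g^*$ trick does not actually close that gap as written. The tower property matches $\bE[g]=\bE[g^*]$, but $\SECE(g)$ is \emph{not} $\bE[\psi(V)g(X)]/\bE[g(X)]$ for a general $g$, because the inner conditional $\bP[y=1\mid V,g(X)=1]$ can differ from $\bP[y=1\mid V]$ when $g$ depends on $x$ beyond $v$ (here $\bP[y=1\mid x]$ depends on $\theta^{*\top}x$, not $\hat\theta^\top x$). So ``running the swap in $v$-space with $g^*$'' only proves the claim for the restricted class $g=g(v)$, which is precisely what the paper also establishes.
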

\begin{proof}
The proof is straightforward. We denote $O=\{x:x\in \bB(-\alpha\theta^*,r_2),~g(x)=0\}$. We will prove that $\bP(x\in O)=0$.

If not, let us denote $\cP= \bP(x\in O)>0$. Since we know $\beta>2(1-\beta)$, which means even if we ``throw away" all the probability mass $1-\beta$ by only setting points in $\bB(\theta^*,r_1)$ with $g$ value equals to 0, there will still be other remaining probability mass retained in $\bB(\theta^*,r_1)$ with $g$ value equals to $1$. Then, there exists $g_2$ such that $g_2(x)=1$ for all $x\in \bB(-\alpha\theta^*,r_2)$ and leads to $\bP(x\in \bB(-\alpha\theta^*,r_2),g_2(x)=1)= \bP(x\in \bB(-\alpha\theta^*,r_2),g_1(x)=1)+\xi$ (enabled by the fact $\beta>2(1-\beta)$ ) and $\bP(g_1(x)=1)=\bP(g_2(x)=1)$ for $x\in\bB(\theta^*,r_1)\cup \bB(-\alpha\theta^*,r_2)$. Since 
\begin{align*}
&\min_{v\in \bB(\hat\theta^\top\theta^*,r_1\|\hat\theta\|)}\left| \frac{1}{1+\exp\left(\frac{-2\hat\theta^\top\theta^*}{\sigma^2\|\hat\theta\|^2}\cdot v\right)}-\frac{1}{e^{-2v}+1}\right|\\
&> \max_{v\in \bB(-\alpha\hat\theta^\top\theta^*,r_2\|\hat\theta\|)}\left| \frac{1}{1+\exp\left(\frac{2\alpha\hat\theta^\top\theta^*}{\sigma^2\|\hat\theta\|^2}\cdot v\right)}-\frac{1}{e^{-2v}+1}\right|,
\end{align*}
which means ``throwing away" points in $\bB(\alpha\theta^*,r_2)$ can more effectively lower the calibration error and we must have 
$$\SECE(g_2)< \SECE(g_1).$$

\end{proof}

Next, we state how to set the parameters such that the condition in Claim~\ref{claim:1} holds. As long as we choose $\sigma,r_1,r_2$ small enough, such that 
\begin{align*}
&\frac{1}{1+\exp(-2I_1/\sigma^2(I_4+r_1I_6) )}-\frac{1}{1+\exp(-2I_1(I_3-r_1I_6) )}\\
&< \frac{1}{1+\exp({-2(I_4+r_2I_6)})}-\frac{1}{1+\exp({2\alpha I_2/\sigma^2 (I_4+r_2I_6)})}
\end{align*}
then,
\begin{align*}
&\min_{v\in \bB(\hat\theta^\top\theta^*,r_1\|\hat\theta\|)}\left| \frac{1}{1+\exp\left(\frac{-2\hat\theta^\top\theta^*}{\sigma^2\|\hat\theta\|^2}\cdot v\right)}-\frac{1}{e^{-2v}+1}\right|\\
&> \max_{v\in \bB(-\alpha\hat\theta^\top\theta^*,r_2\|\hat\theta\|)}\left| \frac{1}{1+\exp\left(\frac{2\alpha\hat\theta^\top\theta^*}{\sigma^2\|\hat\theta\|^2}\cdot v\right)}-\frac{1}{e^{-2v}+1}\right|,
\end{align*}

Then, following similar derivation in Section~\ref{subsub:temp}, we can prove with suitably chosen parameters $r_1,r_2,\sigma$, $\ECE^{S\rightarrow T}>0$.

Lastly, let us further prove $\ECE^{T\rightarrow S}>0$. We choose $r_1$ and $r_2$ small enough such that $v>0$ for all $v\in \bB(\hat\theta^\top\theta^*,r_1\|\hat\theta\|)\cup\bB(\alpha\hat\theta^\top\theta^*,r_2\|\hat\theta\|)$ and $v<0$ for all $v\in \bB(-\hat\theta^\top\theta^*,r_1\|\hat\theta\|)\cup\bB(-\alpha\hat\theta^\top\theta^*,r_2\|\hat\theta\|)$.

For $1/T\in [\frac{-\alpha\hat \theta^T\theta^*}{\sigma^2\|\hat\theta\|},\frac{\hat \theta^T\theta^*}{\sigma^2\|\hat\theta\|}]$, we can calculate the derivative for $\RECE$ as the following:
\begin{align*}
\RECE &= \bE_{v\in\bB(\hat\theta^\top\theta^*,r_1\|\hat\theta\|)}\left[\frac{1}{1+\exp\left(\frac{-2\hat\theta^\top\theta^*}{\sigma^2\|\hat\theta\|^2}\cdot v\right)}-\frac{1}{e^{-2v/T}+1}\right]\\
&+\bE_{v\in\bB(-\hat\theta^\top\theta^*,r_1\|\hat\theta\|)}\left[-\frac{1}{1+\exp\left(\frac{-2\hat\theta^\top\theta^*}{\sigma^2\|\hat\theta\|^2}\cdot v\right)}+\frac{1}{e^{-2v/T}+1}\right]\\
&+\bE_{v\in\bB(\alpha\hat\theta^\top\theta^*,r_2\|\hat\theta\|)}\left[-\frac{1}{1+\exp\left(\frac{2\alpha\hat\theta^\top\theta^*}{\sigma^2\|\hat\theta\|^2}\cdot v\right)}+\frac{1}{e^{-2v/T}+1}\right]\\
&+\bE_{v\in\bB(-\alpha\hat\theta^\top\theta^*,r_2\|\hat\theta\|)}\left[\frac{1}{1+\exp\left(\frac{2\alpha\hat\theta^\top\theta^*}{\sigma^2\|\hat\theta\|^2}\cdot v\right)}-\frac{1}{e^{-2v/T}+1}\right].
\end{align*}

Next, we take a derivative over $x=1/T$ for $x\in [\frac{-\alpha\hat \theta^T\theta^*}{\sigma^2\|\hat\theta\|},\frac{\hat \theta^T\theta^*}{\sigma^2\|\hat\theta\|}]$, which leads to

\begin{align*}
\frac{d\RECE}{dx} &= -2\bE_{v\in\bB(\hat\theta^\top\theta^*,r_1\|\hat\theta\|)}\left[\frac{2ve^{2vx}}{(e^{2vx}+1)^2}\right]+2\bE_{v\in\bB(\alpha\hat\theta^\top\theta^*,r_2\|\hat\theta\|)}\left[\frac{2ve^{2vx}}{(e^{2vx}+1)^2}\right]\\
\end{align*}

Consider the two values 
$$\frac{2ve^{2vx}}{(e^{2vx}+1)^2},\quad \frac{2\alpha ve^{2\alpha vx}}{(e^{2\alpha vx}+1)^2},$$
the ratio 
$$\frac{2\alpha ve^{2\alpha vx}}{(e^{2\alpha vx}+1)^2}/[\frac{2ve^{2vx}}{(e^{2vx}+1)^2}]\rightarrow_{v\rightarrow 0}~2\alpha.$$

That means if we take suitably small $r_1,r_2$ and let $\sigma\in[c_1,c_2]$ with appropriately chosen $c_1,c_2$

$$\frac{d\RECE}{dx}\Big|_{x=\frac{\hat \theta^T\theta^*}{\sigma^2\|\hat\theta\|}}<0.$$

Thus, we know the best choice of $1/T$ should not be equal to $\frac{\hat \theta^T\theta^*}{\sigma^2\|\hat\theta\|}$. Then, notice $\beta>2(1-\beta)$, which means the probability mass in $\bB(\hat\theta^T\theta^*,r_1\|\hat\theta\|)$ cannot be all be ``thrown away"; following similar derivation in Section~\ref{subsub:temp}, we can prove with suitably chosen parameters $r_1,r_2,\sigma$, $\ECE^{T\rightarrow S}>0$.

\end{document}